\definecolor{gray1}{RGB}{240, 240, 240}
\definecolor{gray2}{RGB}{200, 200, 200}
\definecolor{gray3}{RGB}{160, 160, 160}
\definecolor{gray4}{RGB}{140, 140, 140}
\definecolor{gray5}{RGB}{120, 120, 120}
\newcounter{dummy} \numberwithin{dummy}{subsection}
\newtheorem{theorem}[dummy]{Theorem}
\newtheorem{definition}[dummy]{Definition}
\theoremstyle{remark}
\newtheorem{remark}[dummy]{Remark}
\newcommand{\calH}{\mathcal{H}}
\newcommand{\calL}{\mathcal{L}}
\newcommand{\frakX}{\mathfrak{X}}
\DeclareMathOperator{\SE}{SE}
\DeclareMathOperator{\SO}{SO}
\DeclareMathOperator{\spn}{span}
\DeclareMathOperator{\length}{length}
\newcommand{\PTRT}{PT\mathbb{R}^2}
\newcommand{\WaxOn}{\texttt{WaxOn}\xspace}
\newcommand{\WaxOff}{\texttt{WaxOff}\xspace}
\numberwithin{equation}{section}
\begin{document}
\let\WriteBookmarks\relax
\def\floatpagepagefraction{1}
\def\textpagefraction{.001}

\thanks{The authors are supported by the grant GeoProCo from the Trond Mohn Foundation - Grant TMS2021STG02 (GeoProCo). Part of the results of this paper has appeared in the first author's master thesis.}

\title[Geometry of the visual cortex with applications]{Geometry of the visual cortex with applications to image inpainting and enhancement}
\author[F. Ballerin and E. Grong]{Francesco Ballerin and Erlend Grong}
\email{francesco.ballerin@uib.no}  
\email{erlend.grong@uib.no}  
\date{}


\subjclass[2020]{{Primary 94A08; Secondary 35H10, 53C17, 93C20}}

\keywords{image inpainting, image enhancement, neurogeometry, sub-Riemannian diffusion, rototranslation group, unsharp filter, level curve completion}

\pgfplotsset{compat=1.18}

\emergencystretch 3em

\begin{abstract}
    Equipping the rototranslation group $\SE(2)$ with a sub-Riemannian structure inspired by the visual cortex V1, we propose algorithms for image inpainting and enhancement based on hypoelliptic diffusion. We innovate on previous implementations of the methods by Citti, Sarti, and Boscain et al., by proposing an alternative that prevents fading and is capable of producing sharper results in a procedure that we call \WaxOn-\WaxOff. We also exploit the sub-Riemannian structure to define a completely new unsharp filter using $\SE(2)$, analogous to the classical unsharp filter for 2D image processing. We demonstrate our method on blood vessels enhancement in retinal scans.
\end{abstract}

\maketitle

\section{Introduction}\label{section_introduction}

Image inpainting is a process that aims at restoring information that has been lost in a region of the canvas. 
Although in recent times neural networks have proven to be extremely effective at most digital image processing tasks, the black box structure of most implementations makes such tools difficult to understand and trust, as well as prone to unexpected failures. Moreover, the computational requirements to train such networks can often be prohibitive.

The aim of this work is to introduce a new robust and effective algorithm for image inpainting and image enhancement, which does not require training on an image dataset, based on a well-known sub-Riemannian model of the visual cortex V1.

This geometric structure was formalized by Petitot in 1998 \cite{Petitot1999Mar, Petitot2008, Petitot2017} and exploited by Citti and Sarti \cite{Citti2006-xa, Citti2016-gz} and Boscain et al. \cite{Boscain2009, Boscain2012-se,Boscain2014-nd, Boscain2018-et, Boscain2018-qd, boscain2019introduction} to derive a biologically inspired image restoration algorithm.
The algorithm is blind, i.e. it does not exploit information on the position of the corruption. The main idea of this approach is that by lifting the image from $\mathbb{R}^2$ to $\SE(2)$ restoration can be achieved by hypoelliptic diffusion in the direction of the lifted level curves of the image.

The main drawback of this family of algorithms is that, by design, the resulting images incur heavy blurring and loss of higher-frequency information. We aim at tackling such problems by exploiting the sub-Riemannian structure on $\SE(2)$, and the vector field that is transversal to the lifted level lines of the image, to produce sharpening tools. By alternating diffusion along level curves (\WaxOn) with concentration transversally to the level curves and in the direction of the gradient (\WaxOff), we are able to produce inpainting in the damaged areas while also preserving sharpness.

In addition to image inpainting, transversal diffusion allows us to design a completely new sharpening filter, analogous to the classic 2D unsharp filter, with applications to image enhancement and preprocessing. We demonstrate our new method by enhancing blood vessels in a retinal scan. By combining this unsharp filter with the a new procedure that we call \WaxOff, we also propose an inpainting method that preserves even more high-frequency details, and allows for longer diffusion times without excessive blurring.

Section \ref{geometry_vision} briefly introduces the geometrical preliminaries needed to understand the sub-Riemannian model of the visual cortex V1. For more details on sub-Riemannian geometry, we refer to Appendix A. In section \ref{sub_application_image_processing} we present the previous work by Citti and Sarti \cite{Citti2006-xa} and Boscain et al. \cite{Boscain2012-se} on exploiting the sub-Riemannian geometry of the V1 model for image restoration. In section \ref{new_innovations} we present our new developments on the subject, which consist of a new approach to treat images in the geometrical model of interest and two approaches based on sharpening techniques to address blurring, one being the \WaxOff-procedure and the other the unsharp filter on $SE(2)$. We also present a Python package that has been developed in conjunction with this article, from which all figures of this work are derived. \footnote{Python package containing code and reproducible Jupyter Notebooks can be found at \texttt{https://github.com/ballerin/v1diffusion}.}

We thank Xavier Pennec for the helpful discussions around the \WaxOff procedure and Francesco Rossi for sharing with us a MATLAB implementation of the algorithm, which was an inspiration for our own implementation.

\subsection*{Related work}
Extensive work has been presented in the study of the Lie Group $\SE(2)$ relative to image processing, with special attention to the field of retinal imagery. We give in this section an overview of related works.

Citti and Sarti \cite{Citti2006-xa,Citti2016-gz} have presented a framework based on mean curvature flow to achieve restoration through diffusion by performing perception completion in the space $\SE(2)$ followed by non-maxima suppression in the same space.
Work by Franken and Duits has been done to study the possible ways to enhance and sharpen the effects of the proposed algorithms \cite{Franken2009, Duits2010}.
Boscain et al. have expanded on these methods \cite{Boscain2012-se} and have deepened the discussion by proposing a semidiscrete treatment of the problem as well as a variation of the problem in which the position of the corruption is known \cite{Boscain2014-nd, Boscain2018-et, Boscain2018-qd}. 
Other works have focused on the specific application of retinal vessel analysis exploiting the geometric structure of $\SE(2)$ as orientation scores, for example in \cite{Hannink2014, Zhang2016Aug, Bekkers2014Jul}. 
In addition the Lie Group $\SE(2)$ has been successfully used in the context of geometric deep learning \cite{Bekkers2018Apr}, introducing an $\SE(2)$ group convolution layer that allows for a state-of-the-art performance without the need for data augmentation in problems within histopathology, retinal imaging, and electron microscopy.
\section{Sub-Riemannian geometry and image processing}\label{geometry_vision}

\subsection{Geometry of the rototranslation group} \label{sec:SE2}
We consider \emph{the special euclidean group} $\SE(2)$ of dimension 2, as the group of matrices
\begin{align} \label{SE2} \SE(2)
&=\left\{ B  =  \left.
		\begin{bmatrix}
			A & \mathbf{x} \\
			0&1
		\end{bmatrix} \; \right|\; \begin{array}{c}\mathbf{x}\in\mathbb{R}^2, \\  A\in \SO(2) \end{array} \right\} 
=\left\{ \left. 
		\begin{bmatrix}
			\cos\theta&-\sin\theta&x\\
			\sin\theta&\cos\theta&y\\
			0&0&1\\
		\end{bmatrix} \; \right|\; \begin{array}{c} x,y\in\mathbb{R}, \\ \theta\in \mathbb{R}/2\pi \mathbb{Z} \end{array} \right\} 
\end{align}
which is a matrix Lie group, i.e. both a group under matrix multiplication and a smooth manifold. $\SE(2)$ is the group of all transformations on $\mathbb{R}^2$ which preserve distances, orientations, and angles (rigid transformations). Any such transformation can be written as a rototranslation
\begin{alignat*}{4}
    T: &\mathbb{R}^2&\rightarrow & \mathbb{R}^2 \qquad \qquad && A\in \SO(2) \\
    &\textbf{a} &\mapsto & A\textbf{a}+\textbf{x} \qquad \qquad && \mathbf{x},\mathbf{a} \in \mathbb{R}^2.
\end{alignat*}
Using the coordinates $(x,y,\theta)$ in \eqref{SE2}, we see that $\SE(2)$ as a space can be identified with the $3$-dimensional cylinder $\mathbb{R}^2 \times S^1$.

A vector field $X$ on $\SE(2)$ is called \emph{left invariant} if it has the property that any curve $\gamma(t)$ is tangent to $X$ if and only if $B \gamma(t)$ is tangent to $X$ for any $B \in \SE(2)$. Any such vector field will be a linear combination of vector fields $X_1$, $X_2$ and $X_3$, written in the coordinates $(x,y,\theta)$ as
\begin{equation} \label{Xbasis} X_1 = \cos(\theta) \partial_x + \sin(\theta) \partial_y, \qquad X_2 = \partial_\theta, \qquad X_3 = -\sin(\theta)\partial_x + \cos(\theta)\partial_y. \end{equation}
These vector fields have flows $e^{tC_j}(B) = B \exp(tC_j)$, where
$$
C_1 = \begin{bmatrix} 0& 0 & 1\\ 0& 0&0\\ 0&0&0\end{bmatrix}, \qquad
C_2 = \begin{bmatrix} 0& -1 & 0\\ 1& 0&0\\ 0&0&0\end{bmatrix}, \qquad
C_3 = \begin{bmatrix} 0& 0 & 0\\ 0& 0& 1\\ 0&0&0\end{bmatrix}.
$$
Here we have used $e^{tX}$ for the flow of the vector field $X$, while $\exp(C) = \sum_{j=0}^\infty \frac{C^j}{j!}$ denotes the usual exponential of matrix $C$.

Recall that \emph{Lie bracket of vector fields} $X$ and $Y$ is the vector field $[X,Y]$ defined by $[X,Y]f = X(Yf) - Y(Xf)$. We note that the Lie bracket of left-invariant vector fields is always left-invariant, and for the basis in \eqref{Xbasis}, we obtain
$$[X_1, X_2] = - X_3, \qquad [X_2, X_3] = X_1, \qquad [X_1,X_3] =0.$$
We observe in particular that we can obtain $X_3$ by combining $X_1$ and $X_2$. We will take advantage of this in what follows.
By letting $\mathcal H = \spn\{X_1, X_2\}$ we obtain a bracket-generating distribution, which guarantees under Chow-Rashevski\"i's theorem that any two points in $\SE(2)$ can be connected by a path whose tangent vectors lie in the distribution $\mathcal H$. See Appendix A for details. Similarly, if we consider a horizontal distribution $\tilde \calH = \spn \{ X_2, X_3\}$, then we can obtain $X_1$ as the bracket $[X_2,X_3] = X_1$ so that this distribution is also bracket-generating. We will use both for our proposed algorithm.

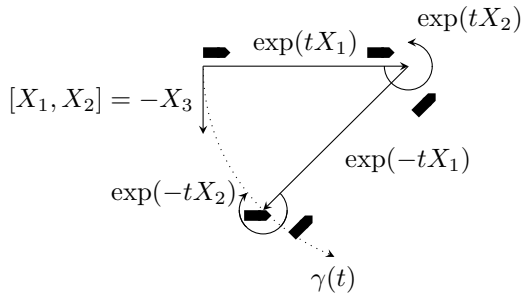
\begin{figure}[H]
	\centering
	\begin{tikzpicture} [scale=0.9]
		\draw [-stealth](0,0) -- (3,0) node[midway,above] {$\exp(tX_1)$};
		\draw [-stealth](3,0) -- (3-2.1213,-2.1213) node[midway,below right] {$\exp(-tX_1)$};
		
		\begin{scope}
			\clip (0,0) rectangle (1.9,-3);
			\draw[dotted] (3,0) circle (3cm);
		\end{scope}
		\draw [-stealth](1.9,-2.8) -- (1.92,-2.81) node[midway,below] {$\gamma(t)$};
		
		\begin{scope}
			\clip (2,-1) rectangle (4,0);
			\draw[-] (3,0) circle (0.35cm);
		\end{scope}
		\begin{scope}
			\clip (3,0) rectangle (4,1);
			\draw[-] (3,0) circle (0.35cm);
		\end{scope}
		\draw [-stealth](3,0.35) -- (2.99,0.35) node[midway,above right] {$\exp(tX_2)$};
		
		\begin{scope}
			\clip (-2.1213-1,-2.1213-1) rectangle (3-2.1213+1,-2.1213);
			\draw[-] (3-2.1213,-2.1213) circle (0.35cm);
		\end{scope}
		\begin{scope}
			\clip (3-2.1213+0.35,-2.1213) circle (0.27cm);
			\draw[-] (3-2.1213,-2.1213) circle (0.35cm);
		\end{scope}
		\begin{scope}
			\clip (3-2.1213-0.35,-2.1213) circle (0.27cm);
			\draw[-] (3-2.1213,-2.1213) circle (0.35cm);
		\end{scope}
		\draw [-stealth](3-2.1213-0.245,-2.1213+0.245) -- (3-2.1213-0.235,-2.1213+0.255) node[midway,left] {$\exp(-tX_2)$};
		
		\draw[-{Triangle Cap},line width=4pt] (0,0.2) -- (0.4,0.2);
		\draw[-{Triangle Cap},line width=4pt] (2.4,0.2) -- (2.8,0.2);
		\draw[-{Triangle Cap},line width=4pt] (3.1,-0.7) -- (3.4,-0.4);
		\draw[-{Triangle Cap},line width=4pt] (1.3,-2.5) -- (1.6,-2.2);
		\draw[-{Triangle Cap},line width=4pt] (0.6,-2.2) -- (1.0,-2.2);
		
		\draw [-stealth](0,0) -- (0,-1) node[midway,left] {$[X_1,X_2]=-X_3$};
	\end{tikzpicture}
	\;\;\;\;\;\;\;\;\;\;\;\;\;\;\;\;\;
	\caption{Modeling $\SE(2)$ as a car with orientation, where $X_1$ is forward movement and $X_2$ is counter-clockwise rotation. Translation in the direction $X_3$ can be obtained by combining infinitesimal movements along $X_1$ and $X_2$.} 
\end{figure}

\subsection{The visual cortex V1 and curve completion}\label{cortex_preliminaries}
When seeking to produce image processing algorithms, one option is to draw inspiration from the biological model that drives human perception.
Visual information is processed in the brain by the \textit{visual cortex}, located in the occipital lobe.
\emph{The primary visual cortex V1} is higy specialized in processing orientations and recognize patterns.
From neurology we know that a neuron switches from its dormant state to its excited state when it gets sufficiently stimulated by either an external input or by other neurons \cite{OrhanE1969}.

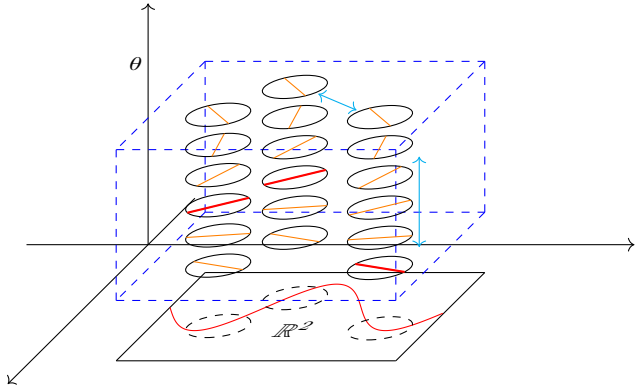
\begin{figure}[H] 
	\centering
	\begin{tikzpicture}[scale=0.8]
		
		\draw[thin,->] (-2,0) -- (8,0);
		\draw[thin,->] (0,0) -- (0,4);
		
		\begin{scope}[canvas is zx plane at y=0]
			\draw[->] (-2,0) -- (6,0);
			
			\draw[-] (1.2,1.4) -- (5,1.4);
			\draw[-] (1.2,6) -- (5,6);
			\draw[-] (1.2,1.4) -- (1.2,6);
			\draw[-] (5,1.4) -- (5,6);
			
			\draw[samples=400,variable=\t, smooth, domain=0:(pi*1.46), color=red] plot ({2.7+sin(deg(2*\t))} , 1.4+\t);
			
			\draw[dashed] (3.5,2.5) circle (0.5);
			\draw[dashed] (2.3,3.3) circle (0.5);
			\draw[dashed] (3.6,5.2) circle (0.5);
		\end{scope}
		
		\begin{scope}[canvas is zx plane at y=1]
			\draw[-, color=blue, dashed] (1.2,1.4) -- (5,1.4);
			\draw[-, color=blue, dashed] (1.2,6) -- (5,6);
			\draw[-, color=blue, dashed] (1.2,1.4) -- (1.2,6);
			\draw[-, color=blue, dashed] (5,1.4) -- (5,6);
		\end{scope}
		
		\begin{scope}[canvas is zx plane at y=3.5]
			\draw[-, color=blue, dashed] (1.2,1.4) -- (5,1.4);
			\draw[-, color=blue, dashed] (1.2,6) -- (5,6);
			\draw[-, color=blue, dashed] (1.2,1.4) -- (1.2,6);
			\draw[-, color=blue, dashed] (5,1.4) -- (5,6);
		\end{scope}
		
		\begin{scope}[canvas is xy plane at z=1.2]
			\draw[-, color=blue, dashed] (1.4,1) -- (1.4,3.5);
			\draw[-, color=blue, dashed] (6,1) -- (6,3.5);
		\end{scope}
		
		\begin{scope}[canvas is xy plane at z=5]
			\draw[-, color=blue, dashed] (1.4,1) -- (1.4,3.5);
			\draw[-, color=blue, dashed] (6,1) -- (6,3.5);
		\end{scope}

		\begin{scope}[every node/.append style={
				yslant=0,sloped}
			];
			\node at (-.2,3) {\scalebox{1}[.7]{$\theta$}};
		\end{scope}
		
		\foreach \theta in {-20,10,...,130}{
			\begin{scope}[canvas is zx plane at y=1.5+((\theta-10)/60)]
				\draw[-] (3.5,2.5) circle (0.5);
				\draw[-] (2.3,3.3) circle (0.5);
				\draw[-] (3.6,5.2) circle (0.5);
				
				\draw[-,color=orange] ({3.5+0.5*sin(\theta)},{2.5-0.5*cos(\theta)}) -- ({3.5-0.5*sin(\theta)},{2.5+0.5*cos(\theta)});
				\draw[-,color=orange] ({2.3+0.5*sin(\theta)},{3.3-0.5*cos(\theta)}) -- ({2.3-0.5*sin(\theta)},{3.3+0.5*cos(\theta)});
				\draw[-,color=orange] ({3.6+0.5*sin(\theta)},{5.2-0.5*cos(\theta)}) -- ({3.6-0.5*sin(\theta)},{5.2+0.5*cos(\theta)});
			\end{scope}
		}
		\begin{scope}[canvas is zx plane at y=1]
			\draw[-,color=red, thick] ({3.6+0.5*sin(-20)},{5.2-0.5*cos(-20)}) -- ({3.6-0.5*sin(-20)},{5.2+0.5*cos(-20)});
		\end{scope}
		
		\begin{scope}[canvas is zx plane at y=2]
			\draw[-,color=red, thick] ({3.5+0.5*sin(40)},{2.5-0.5*cos(40)}) -- ({3.5-0.5*sin(40)},{2.5+0.5*cos(40)});
			\draw[-,color=red, thick] ({2.3+0.5*sin(40)},{3.3-0.5*cos(40)}) -- ({2.3-0.5*sin(40)},{3.3+0.5*cos(40)});
		\end{scope}

		\begin{scope}[canvas is xy plane at z=4]
			\draw[<->, color=cyan] (6,1.5) -- (6,3);
		\end{scope}
		\begin{scope}[canvas is zx plane at y=3.5]
			\draw[<->, color=cyan] (2.6,3.8) -- (3.3,4.7);
		\end{scope}
		
		\begin{scope}[every node/.append style={
				xslant=1,sloped}
			];
			\node at (2.4,-1.4) {\scalebox{1}[.7]{$\mathbb{R}^2$}};
		\end{scope}
	\end{tikzpicture}
	\caption{Visual Cortex V1 under a stimulus (red curve): the red orientation columns receive direct stimulus from the input, as opposed to the orange ones. Excitatory synapses for simple cells located in the same hypercolumn or that are spatially close and sensitive to the same orientation are indicated by cyan arrows.}
	\label{fig:V1}
\end{figure}

Upon studying the visual cortex V1 one finds that the neurons are arranged in cells with elongated receptive fields, which exhibit even or odd symmetric patterns similar to Gabor filters~\cite{Marcelja1980Nov}.
In a simplified model, the neurons inside V1 are grouped into \textit{orientation columns}, each being sensitive to stimuli at a specific point of the retina, corresponding to the spatial coordinate on the field of view, and a specific orientation.
Orientation columns are themselves grouped together into \textit{hypercolumns} that are sensitive to stimuli in a certain position of the retina, regardless of the orientation.
Orientation columns are connected in two different ways: \textit{vertical} (inhibitory) synapses and \textit{horizontal} (excitatory) synapses. The vertical connections happen between columns belonging to the same hypercolumn, whereas horizontal ones happen between columns belonging to different hypercolumns that are spatially close and have similar orientation sensitivity. See Figure~\ref{fig:V1} for an illustration.

We can model V1  as $\SE(2)$, where the hypercolumns are given as coordinates $(x,y)$ and the orientation sensitivity of the orientation columns is given by $\theta$. Because of the inhibitory synapses, we are only allowed to move along directions $X_2$ (within a hypercolumn) and $X_1$ (between hypercolumns).

As presented in \cite{Petitot2003Mar} the map $SE(2)\to\mathbb R^2$ associating to each neuron of V1 its preferred orientation presents three classes of qualitatively different points: regular points, pinwheels, and saddle points. In particular, pinwheels are singular points where all orientations converge while saddle points are singular points where the orientations bifurcate.

We remark that the orientations mentioned are directionless, and so the correct space would be  $PT\mathbb{R}^2:=SE(2)/\simeq$
which is the result of the identification $(x,y,\theta)\simeq(x,y,\theta+\pi)$. The notation reflects that $PT\mathbb{R}^2$ can be considered as the space of lines in the tangent space. See e.g. \cite{Boscain2010Sep} for details. However, we can continue to develop our theory on $\SE(2)$ as long as we are using operations that are invariant under the identification $\simeq$. This has the advantage that we can use global formulas for the vector fields $X_1$, $X_2$, $X_3$, though we note that $X_1^2$, $X_2^2$ and $X_3^2$ are invariant under quotient by $\simeq$.

Gestalt laws have been proposed to explain the phenomenon in which the human brain ``fills in'' the gaps between curves or edges that present similar orientations while enhancing the contrast of objects that present different orientations.
For a more mathematical description, let $\gamma_0:[a,b]\cup [c,d]\rightarrow \mathbb{R}^2$,  $a<b<c<d$, be a smooth curve, parametrized by arc length, that is partially hidden in the interval $t\in(b,c)$. We want to find a curve $\gamma:[b,c]\rightarrow \mathbb{R}^2$, parametrized by arc length that completes $\gamma_0$ while minimizing some cost $E[\gamma]$. We require that  $\gamma(b)=\gamma_0(b)$, $\gamma(c)=\gamma_0(c)$ and for initial and final derivatives,  $\dot\gamma(b)= \pm\dot\gamma_0(b)$ and $\dot\gamma(c) = \pm \dot\gamma_0(c)$. We are looking for a curve that is as smooth as possible, in the sense that the curve needs to minimize the energy $E_\beta(\gamma) = \int_b^c (1+ \beta \lvert K_\gamma(s)\rvert^2)ds$, $\beta >0$, with the geodesic curvature given by $K_\gamma(t) = \frac{\dot x \ddot y - \dot y \ddot x}{(\dot x ^2 + \dot y ^2)^\frac{3}{2}}$.

Operationally we can find such curves by lifting the problem to $\SE(2)$ and considering a curve $\Gamma(t) = (\gamma(t), \theta(t))$ in $\SE(2)$ with $\dot \gamma(b) = \pm (\cos \theta(b) \partial_x + \sin \theta(b)\partial_y)$ and $\dot \gamma(c) = \pm (\cos \theta(c) \partial_x + \sin \theta(c)\partial_y )$. Curves minimizing the energy $E_\beta(\gamma)$ can then be considered as projections of sub-Riemannian geodesics in $\SE(2)$. To give more details, let $\calH = \spn\{ X_1, X_2\}\subsetneq T\SE(2)$ be as in Section~\ref{sec:SE2}. We introduce a smoothly varying inner product $g_\beta = \langle \cdot , \cdot \rangle_\beta$ defined just on $\calH$ by identities
$$\langle X_1, X_1 \rangle_\beta =1, \qquad \langle X_1, X_2 \rangle_\beta = 0, \qquad \langle X_2, X_2 \rangle_\beta = \beta^{-1}.$$
This fiber metric $g_\beta$ on $\calH$ is then called \emph{a sub-Riemannian metric}, and the pair $(\calH,g_\beta)$ will be \emph{a sub-Riemannian structure}. Geodesics of such a sub-Riemannian metric are then curves $\Gamma(t)$ that are tangent to $\calH$ and minimize the length with respect to $g_\beta$,
$$\length_\beta(\Gamma) = \int_b^c \|\dot {\Gamma}(t)\| \, dt = \int_b^c \sqrt{\langle X_1, \dot{\Gamma}(t) \rangle_\beta +  \langle X_2,  \dot {\Gamma}(t) \rangle_\beta} \, dt .$$
We remark that the requirement that $\Gamma$ is tangent to $\calH$ is exactly analogous to just moving in the ``admissible directions of V1'', where a smaller $\beta$ indicates an increased cost of moving within a hypercolumn. For a proof that the problem of minimizing the energy $E_\beta$ is equivalent to the sub-Riemannian problem, see \cite{Boscain2010Sep}.

\subsection{Curve completion in image processing} In the concrete applications described in this work, which are in the field of image processing, one will not find curves to complete and functionals to minimize, but rather corrupted images to restore. Images can be thought of in terms of curves if one takes into consideration the level curves of the image function, i.e. non-degenerate connected components of the level sets, as in this case, we need to treat multiple curves at once.

As described in \cite{Boscain2012-se, Prandi2017Apr} one can approach this problem by considering in a stochastic way all possible admissible paths starting at the endpoint of the curve to reconstruct, and model the controls by independent Wiener processes $u_t$ and $v_t$ obtaining the following SDE:
\[\begin{pmatrix}dx_t\\dy_t\\d\theta_t\end{pmatrix} = \sqrt{2} X_1 \circ du_t + \sqrt{2 \beta} X_2 \circ dv_t = \sqrt{2} \begin{pmatrix}\cos\theta_t\\\sin\theta_t\\0\end{pmatrix} \circ du_t + \sqrt{2\beta} \begin{pmatrix}0\\0\\1\end{pmatrix} \circ dv_t\]
The diffusion process associated to such SDE is $\frac{\partial\Psi}{\partial_t} = \Delta_\beta \Psi$
where
\[\Delta_\beta = X_1^2 + \beta X_2^2 = \left(\cos\theta\frac{\partial}{\partial_x}+\sin\theta\frac{\partial}{\partial y}\right)^2 + \beta \frac{\partial^2}{\partial \theta^2}.\]
We remark that $\Delta_\beta$ is symmetric with respect to the volume $d\mu = |dx \wedge dy \wedge d\theta|$ which is the Haar measure on $\SE(2)$. We call $\Delta_\beta$ the \emph{sub-Laplacian} of $g_\beta$. The operator $\Delta_\beta$ is not elliptic, but it has a smooth, strictly positive heat kernel $p_t(x,y)$ with respect to $d\mu$.
In the next section, we will introduce a classic approach to image inpainting exploiting the fact that $\Delta_\beta$ is the intrinsic sub-Laplacian operator of $\SE(2)$ endowed with sub-Riemannian structure $(\calH, g_\beta)$. 

We remark that  curves, due to pinwheels and saddle points, may not always have well defined lifts at every point. An approach to this issue can be through a "blowing up" model as presented in \cite{Petitot2003Mar}, which deals with pinwheel singularities from the sub-Riemannian perspective. However, as we will see in the next section, for practical applications related to image processing this turns out to not be a critical issue if performing a preprocessing step consisting of Guassian smoothing.

\section{Sub-Riemannian application to image processing}\label{sub_application_image_processing}

\subsection{The classical inpainting algorithm}\label{known_algorithm}
The inpainting algorithm proposed by Citti and Sarti \cite{Citti2006-xa} and improved by Boscain, Duplaix, Gauthier and Rossi \cite{Boscain2012-se}, is a ``blind algorithm'', i.e. it makes no assumption on the position of the corruption, and is therefore applied to the whole image without distinction between which areas contain noise and which areas do not. The input of the algorithm is a grayscale image, a signal from a rectangular portion $R \subset \mathbb R ^2$ with values between 0 and 1, where 0 is regarded as white and 1 as black. This is in contrast with the usual convention in image processing and is due to the fact that the details on a photographic picture are most commonly black rather than white. Corruption is represented as an area of constant value 0 (white). This can be changed in order to adapt to the nature of the corruption.

Let $\Pi: \SE(2) \to \mathbb{R}^2$ be the projection. Recall that for a continuously differentiable image $I:\mathbb{R}^2\supset R\rightarrow [0,1]$ the derivative $X_3(I\circ \Pi)$ at $(x,y,\theta)$ takes the form
$$\textstyle X_3(I \circ \Pi)(x,y,\theta)=-\sin(\theta)(\partial_x I)(x,y)+\cos(\theta)(\partial_y I)(x,y),$$
which gives the derivative of $I$ in the direction of the vector $(-\sin\theta,\cos\theta)$. Let us for simplicity write $X_3(I \circ \Pi)$ simply as $X_3 I$. The maximum of $\theta \mapsto |X_3I(x,y,\theta)|^2$ is achieved when $(-\sin\theta,\cos\theta)$ is parallel to the gradient $\nabla I(x,y)$.
If the level curve of the image is defined at a point, it follows that such maximum is achieved when $(\cos\theta, \sin\theta)$ is the direction of the level curve at that point, regardless of orientation. 

Recall that a smooth function $f:\mathbb R^2\to\mathbb R$ is said to be Morse if it has only isolated critical points with nondegenerate Hessian. For such a function the points for which the direction of the level curve is not well defined are isolated. It is a known result that $f$ defined to be $f:=I\ast G_s$, the convolution with a Gaussian of standard deviation $s_x = s_y := s>0$, is generically a Morse function \cite{Boscain2012-se}. From now on let $I$ denote the convolution of the original image by a Gaussian of standard deviation $s>0$ rather than the raw image itself.

Let $O:\SE(2) \to [0,1]$ be defined by 
$$O(x,y,\theta):=\begin{cases}
	I(x,y)& \text{if $\lvert X_3 I\rvert(x,y,\theta) =\max_{\phi} \lvert X_3I\rvert(x,y,\phi)$,}\\
	0&\text{otherwise,}
\end{cases}$$
and mapping a non-critical point $(x,y,\theta)$ to the value of the image at $(x,y)$ if $\theta$ is the direction of the level curve at the point $(x,y)$, regardless of orientation, or zero otherwise. The whole image domain is lifted in $\SE(2)$ on the domain
$$\Sigma_0=\{(x,y,\theta):\lvert X_3I\rvert(x,y,\theta) =\max_{\phi} \lvert X_3I\rvert(x,y,\phi) >0\}$$

\begin{figure}[H]
		\centering
		\begin{subfigure}[t]{0.21\linewidth}
			\centering
			\includegraphics[width=\linewidth]{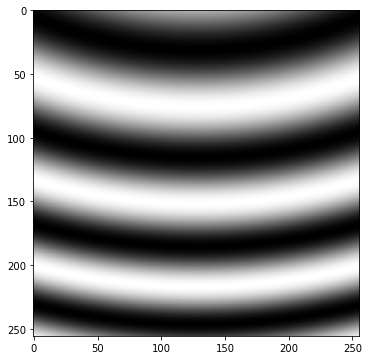}
		\end{subfigure}\;\;
		\begin{subfigure}[t]{0.24\linewidth}
			\centering
			\includegraphics[width=\linewidth]{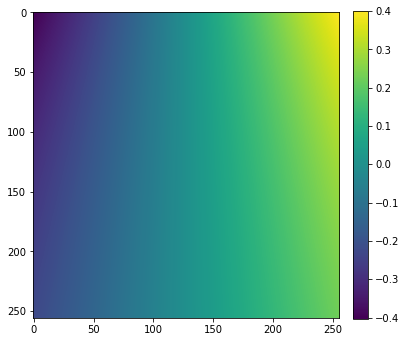}
		\end{subfigure}\;\;
        \begin{subfigure}[t]{0.24\linewidth}
			\centering
			\includegraphics[width=\linewidth]{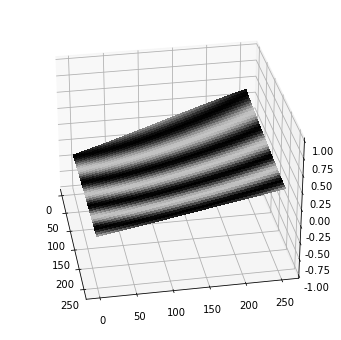}
		\end{subfigure}
        \\
        \begin{subfigure}[t]{0.21\linewidth}
			\centering
			\includegraphics[width=\linewidth]{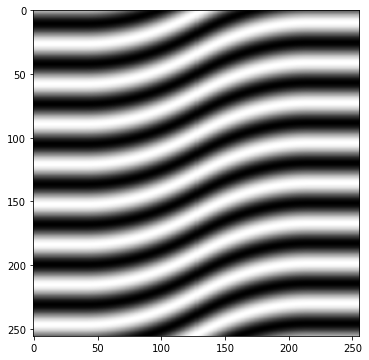}
		\end{subfigure}\;\;
		\begin{subfigure}[t]{0.24\linewidth}
			\centering
			\includegraphics[width=\linewidth]{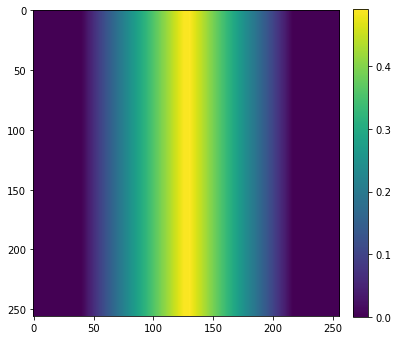}
		\end{subfigure}\;\;
        \begin{subfigure}[t]{0.24\linewidth}
			\centering
			\includegraphics[width=\linewidth]{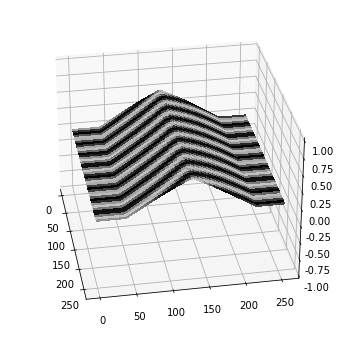}
		\end{subfigure}
    \caption{Examples of lifted images. In the first column are the original images, in the second column the orientation of the level lines, and in the third column, the images lifted to $\SE(2)$, suppressing the trivial zero values for visualization purposes.}
\end{figure}

The lifted set corresponds to the maximum of activity of the output of simple cells that are stimulated exclusively by external signals, which can be modeled mathematically as a Dirac mass concentrated on $\Sigma_0$
\begin{equation} \label{tildeIDelta}
\tilde I(x,y,\theta)=O(x,y,\theta)\delta_{\Sigma_0}.
\end{equation}
$\tilde I (x,y,\theta)$ then corresponds to the image as perceived by our model of V1, where all the neural activity is concentrated on the Dirac mass $\delta_{\Sigma_0}$.
Integrating over a fiber $\Pi^{-1}(x,y) \cong S^1$ that is non-critical at position $(x,y)$ yields exactly $I(x,y)$. If a point happens to be critical for the function $I(x,y)$, then by definition will not be part of $\Sigma_0$ since the gradient at that point vanishes.

The result of the restoration process is given by computing the solution~$u(t,x,y,\theta)$ at time~$T$ of 
\begin{equation} \label{heatflow}\begin{cases}
    \partial_t u = \Delta_\beta u,\\
    u(0,x,y,\theta) = \tilde I (x,y,\theta)
\end{cases}\end{equation}
where $\Delta_\beta= X_1^2 + \beta X_2^2$. The parameter $\beta$ is useful in practice to tune the ``strength'' of the diffusion that is performed in the direction $X_2$ compared to the direction $X_1$. This corresponds to defining how strong are the horizontal and vertical excitatory synapses in the V1 biological model. For details on the original implementation and numerical schemes see \cite{Citti2006-xa, Boscain2012-se}. Our implementation makes use of the naive finite element method for solving the sub-Riemannian heat equation. Results could potentially be further improved by more complex solvers.The reconstructed image is recovered by projecting the solution $u(T,x,y,\theta)$ of the diffusion equation to $\mathbb R^2$, either by integrating over fibers or by taking the maximum.

\begin{figure}[H]
	\centering
	\begin{subfigure}[t]{0.3\linewidth}
		\begin{tikzpicture}[scale=0.6]
			\begin{axis}[axis equal image, restrict x to domain=0:1, restrict y to domain=0:1]
				
				\addplot[variable=t,mesh,domain={-0.35}:{0.35}, color=red, samples=10] ({1/2+t},{1/2});
				\addplot[variable=t,mesh,domain={-0.35}:{0.35}, color=green, samples=10] ({1/2},{1/2+t});
				\foreach \k in {0.6,0.9, ...,1}{
					\addplot[variable=t,mesh,domain=0:{0.3/\k}, color=red, samples=10] ({1/2+\k^2*sin(deg(t))},{1/2+\k^2*(cos(deg(t))-1)});
					\addplot[variable=t,mesh,domain=0:{0.3/\k}, color=red, samples=10] ({1/2+\k^2*sin(deg(t))},{1/2+\k^2*(-cos(deg(t))+1)});
					\addplot[variable=t,mesh,domain=0:{0.3/\k}, color=red, samples=10] ({1/2-\k^2*sin(deg(t))},{1/2+\k^2*(-cos(deg(t))+1)});
					\addplot[variable=t,mesh,domain=0:{0.3/\k}, color=red, samples=10] ({1/2-\k^2*sin(deg(t))},{1/2+\k^2*(cos(deg(t))-1)});
					
					\addplot[variable=t,mesh,domain=0:{0.3/\k}, color=green, samples=10] ({1/2+\k^2*(cos(deg(t))-1)},{1/2+\k^2*sin(deg(t))});
					\addplot[variable=t,mesh,domain=0:{0.3/\k}, color=green, samples=10] ({1/2+\k^2*(-cos(deg(t))+1)}, {1/2+\k^2*sin(deg(t))});
					\addplot[variable=t,mesh,domain=0:{0.3/\k}, color=green, samples=10] ({1/2+\k^2*(-cos(deg(t))+1)}, {1/2-\k^2*sin(deg(t))});
					\addplot[variable=t,mesh,domain=0:{0.3/\k}, color=green, samples=10] ({1/2+\k^2*(cos(deg(t))-1)}, {1/2-\k^2*sin(deg(t))});
				}
				\addplot[variable=t,mesh,domain=0.103:0.897, color=black] (t,{(2*(t-1/2))^3+1/2});
			\end{axis}
		\end{tikzpicture}
		\subcaption{Projections to $\mathbb R^2$ of the integral lines}
	\end{subfigure}
	\begin{subfigure}[t]{0.3\linewidth}
		\begin{tikzpicture}[scale=0.55]
			\begin{axis}
				\addplot3[variable=t,mesh,domain=0.103:0.897, color=black, samples=20] (t,{(2*(t-1/2))^3+1/2}, {atan(6*(2*t-1)^2)/180*3.14});
				
				\addplot3[variable=t,mesh,domain={-0.35}:{0.35}, color=red, samples=10] ({1/2+t},{1/2},0);
				\foreach \k in {0.6,0.9, ...,1}{
					\addplot3[variable=t,mesh,domain=0:{0.3/\k}, color=red, samples=10] ({1/2+\k^2*sin(deg(t))},{1/2+\k^2*(cos(deg(t))-1)},-t);
					\addplot3[variable=t,mesh,domain=0:{0.3/\k}, color=red, samples=10] ({1/2+\k^2*sin(deg(t))},{1/2+\k^2*(-cos(deg(t))+1)},t);
					\addplot3[variable=t,mesh,domain=0:{0.3/\k}, color=red, samples=10] ({1/2-\k^2*sin(deg(t))},{1/2+\k^2*(-cos(deg(t))+1)},-t);
					\addplot3[variable=t,mesh,domain=0:{0.3/\k}, color=red, samples=10] ({1/2-\k^2*sin(deg(t))},{1/2+\k^2*(cos(deg(t))-1)},t);
				}
			\end{axis}
		\end{tikzpicture}
		\subcaption{Integral lines of $X_1^2+\beta X_2^2$ in $\PTRT$}
	\end{subfigure}
	\;\;
	\begin{subfigure}[t]{0.3\linewidth}
		\begin{tikzpicture}[scale=0.55]
			\begin{axis}
				\addplot3[variable=t,mesh,domain=0.103:0.897, color=black, samples=20] (t,{(2*(t-1/2))^3+1/2}, {atan(6*(2*t-1)^2)/180*3.14});
				
				\addplot3[variable=t,mesh,domain={-0.35}:{0.35}, color=green, samples=10] ({1/2},{1/2+t},0);
				\foreach \k in {0.6,0.9, ...,1}{
					\addplot3[variable=t,mesh,domain=0:{0.3/\k}, color=green, samples=10] ({1/2+\k^2*(cos(deg(t))-1)},{1/2+\k^2*sin(deg(t))}, t);
					\addplot3[variable=t,mesh,domain=0:{0.3/\k}, color=green, samples=10] ({1/2+\k^2*(-cos(deg(t))+1)}, {1/2+\k^2*sin(deg(t))}, -t);
					\addplot3[variable=t,mesh,domain=0:{0.3/\k}, color=green, samples=10] ({1/2+\k^2*(-cos(deg(t))+1)}, {1/2-\k^2*sin(deg(t))}, t);
					\addplot3[variable=t,mesh,domain=0:{0.3/\k}, color=green, samples=10] ({1/2+\k^2*(cos(deg(t))-1)}, {1/2-\k^2*sin(deg(t))}, -t);
				}
			\end{axis}
		\end{tikzpicture}
		\subcaption{Integral lines of $X_3^2+\beta X_2^2$ in $\PTRT$}
	\end{subfigure}
	\caption{Integral lines of the vector fields $X_1^2+\beta X_2^2$ (red) and $X_3^2+\beta X_2^2$ (green) for a polynomial curve, at point $\left(\frac{1}{2},\frac{1}{2}\right)$, varying the coefficient $\beta$.} 
\end{figure}
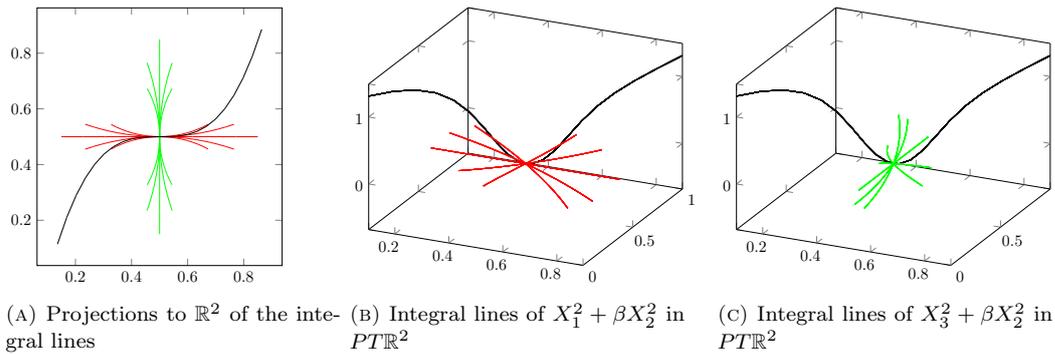

\begin{figure}[H]
    \centering
    \begin{subfigure}[t]{0.23\textwidth}
    \includegraphics[width=\textwidth]{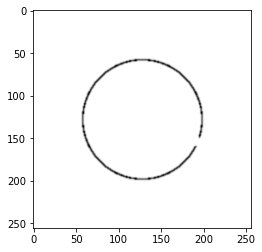}
    \caption{Original image}
    \end{subfigure}
    \begin{subfigure}[t]{0.23\textwidth}
    \includegraphics[width=\textwidth]{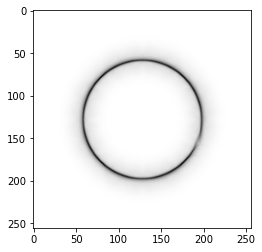}
    \caption{Diffusion at $T=60$ with $\beta=0$}
    \end{subfigure}
    \begin{subfigure}[t]{0.23\textwidth}
    \includegraphics[width=\textwidth]{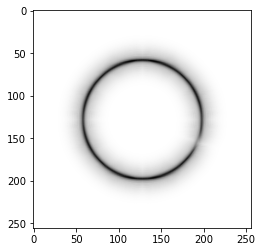}
    \caption{Diffusion at $T=60$ with $\beta=0.25$}
    \end{subfigure}
    \begin{subfigure}[t]{0.23\textwidth}
    \includegraphics[width=\textwidth]{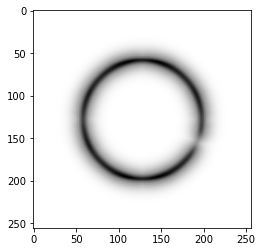}
    \caption{Diffusion at $T=60$ with $\beta=0.5$}
    \end{subfigure}
    \caption{Application of the classic restoration algorithm to a basic example of a broken circle. We see that an increase in $\beta$ produces a more spread-out diffusion.}
    \label{fig:Beta}
\end{figure}
\begin{remark}
One might get the impression from Figure~\ref{fig:Beta}, that we should keep $\beta$ as small as possible or even zero. However, if we want to complete level curves where the endpoints do not have the same orientation, we will need some contributions $X_2$ for the level curves to rotate. 
\end{remark}

\section{New innovations in the sub-Riemannian method}\label{new_innovations}

\subsection{Lift as a normal distribution}\label{gaussian_lift}
In the work by Marcelja \cite{Marcelja1980Nov} and Jones and Palmer \cite{Jones1987Dec} the similarity in behavior between simple cells and Gabor filters is studied and presented. This was later confirmed in the works by Olshausen and Field \cite{Field1994Jul, Olshausen1996Jun, Olshausen1997Dec} who studied sparse codes for natural images and their relation to Gabor-wavelets and simple cells in the cortex V1. For a Gabor filter with a fixed orientation $\theta$ the output of a signal through the filter decays exponentially as the angle of the original signal differs from $\theta$. It is then argued that each hypercolumn behaves as a stack of Gabor filters with different orientations. From this idea of neural signal in the visual cortex V1 decaying exponentially as the angle in the fiber $S^1$ differs from the angle of the level curve, we can model each fiber as a normal distribution centered at the angle corresponding to that of the level curve. This choice preserves locality but without the need to specify a kernel size for the Gabor filters. We are therefore ``spreading'' the input signal around the orientation of maximum response $\theta$ of the simple cells, and doing so following a Gaussian distribution centered around such orientation with this procedure
\begin{equation} \label{Lsigma} \tilde I = \mathcal{L}_{\sigma}(I) = (I \circ \Pi) \cdot \exp \left(- \frac{(X_1 (I \circ \Pi))^2}{2 \sigma^2 |\nabla I|^2} \right) = (I \circ \Pi) \cdot \exp \left(- \frac{|\nabla I|^2 -(X_3 (I \circ \Pi))^2}{2 \sigma^2 |\nabla I|^2} \right) .\end{equation}

In other words, $I(x,y)\cdot \exp\left(-\left\langle \frac{\nabla I}{\lvert\nabla I \rvert} , (\cos\theta, \sin\theta)\right\rangle^2/(2\sigma^2)\right)$, where we are using the $\sigma^2$ to adjust the variance around the optimal angle. The lift in \eqref{tildeIDelta}
can be considered as a limiting case of \eqref{Lsigma} when~$\sigma \to 0$. 

One can alternatively model such lift in terms of a wrapped normal distribution, or the more tractable von Mises distribution.

Having defined such a lift $I \to \tilde I$ , we need a projection process inverting this lift. The simplest projection that is inverse to $\mathcal{L}_\sigma$ is the $\Pi_{\max}$ is given by the maximum
$$\Pi_{\max}(\tilde I)(x,y) = \max_{\theta} \tilde I(x,y,\theta).$$
However, we do not in general get a smooth function as a result of taking the maximum. An alternative projection can be defined as follows.
\begin{theorem}
Define an operator $\Pi_{\sigma}: C^\infty(\SE(2),(0,1]) \to C^\infty(\mathbb{R}^2, (0,1])$ by
$$\Pi_{\sigma}(\tilde I)(x,y) = \exp\left(\frac{1}{4\sigma} + \frac{1}{2\pi} \int_0^{2\pi} \ln \tilde I(x,y,\theta) d\theta \right).$$
Then $\Pi_{\sigma}(\mathcal{L}_{\sigma}(I)) = I$.
\end{theorem}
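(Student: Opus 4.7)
The plan is to compute both sides explicitly using polar coordinates for the gradient. First I would observe that if we write $\nabla I(x,y) = \lvert \nabla I\rvert (\cos\alpha, \sin\alpha)$ for the angle $\alpha = \alpha(x,y)$ pointing in the gradient direction, then
\[ X_1(I\circ\Pi)(x,y,\theta) = \cos\theta\,\partial_x I + \sin\theta\,\partial_y I = \lvert\nabla I\rvert \cos(\theta - \alpha). \]
Substituting into the definition of $\mathcal{L}_\sigma$ collapses the lift to the clean form
\[ \mathcal{L}_\sigma(I)(x,y,\theta) = I(x,y)\,\exp\!\left(-\frac{\cos^2(\theta-\alpha)}{2\sigma^2}\right), \]
which shows transparently that the fiberwise angular profile is a (squared-cosine) bump centered at the level-curve direction $\theta = \alpha + \pi/2$, independent of $\lvert\nabla I\rvert$.

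Next I would take the logarithm fiberwise, which converts the product into a sum and lets me apply $\Pi_\sigma$. Using the half-angle identity $\cos^2(\theta-\alpha) = \tfrac12 + \tfrac12 \cos(2(\theta-\alpha))$, I get
\[ \ln \mathcal{L}_\sigma(I)(x,y,\theta) = \ln I(x,y) - \frac{1}{4\sigma^2} - \frac{\cos(2(\theta-\alpha))}{4\sigma^2}. \]
Averaging over $\theta \in [0,2\pi]$, the oscillatory term integrates to zero (its period $\pi$ divides $2\pi$ evenly), leaving
\[ \frac{1}{2\pi}\int_0^{2\pi} \ln \mathcal{L}_\sigma(I)(x,y,\theta)\,d\theta = \ln I(x,y) - \frac{1}{4\sigma^2}. \]
Adding the constant $\tfrac{1}{4\sigma^2}$ (which is what the definition of $\Pi_\sigma$ prescribes — the ``$\tfrac{1}{4\sigma}$'' in the statement looks like a typo for $\tfrac{1}{4\sigma^2}$, since the calculation fixes this value uniquely) and exponentiating yields $I(x,y)$, as required.

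The computation itself is routine; the only point that needs care is the hypothesis that $\tilde I$ take values in $(0,1]$ so that $\ln \tilde I$ is well-defined and integrable, and that the gradient $\nabla I$ does not vanish (so that the angle $\alpha$ is defined). At critical points $\mathcal{L}_\sigma(I)$ as written has an indeterminate form, so I would note that at such isolated points (generic since $I$ is Morse) the lift extends continuously to the value $I(x,y) e^{-1/(2\sigma^2)} \cdot (\text{angular factor})$ — but since the averaging argument is purely pointwise in $(x,y)$ and a null set of critical $\theta$'s does not affect the $\theta$-integral, the identity still holds at every $(x,y)$ where $I$ is defined. No genuine obstacle arises; the content of the theorem is really the observation that a squared-cosine Gaussian bump has a log-mean that is a constant shift of $\ln I$.
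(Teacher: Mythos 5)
Your proof is correct and follows essentially the same route as the paper's: take the fiberwise logarithm and use that the average of $\cos^2$ over a full period is $\tfrac12$. Your remark about the exponent is also on point --- the paper's definition of $\mathcal{L}_\sigma$ uses $2\sigma^2$ while its own proof (and the constant $\tfrac{1}{4\sigma}$ in the statement) silently switches to $2\sigma$, so one of the two must be adjusted exactly as you note.
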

\begin{proof}
If $\tilde I = \mathcal{L}_{\sigma}(I)$, then averaging over a period yields
\begin{align*} \int_0^{2\pi} \ln(\tilde I) d\theta
& = \int_0^{2\pi} \left( \ln I(x,y) - \frac{(\cos \theta \partial_x I+ \sin \theta \partial_y I)^2}{2\sigma|\nabla I|^2}\right) d\theta = 2\pi \left( \ln I(x,y) - \frac{1}{4\sigma} \right) 
\end{align*}
We hence recover the original image $I$ from this procedure. 
\end{proof}

The maximum $\Pi_{\max}$ will in general be more computationally efficient, and our experiments show that problems of non-differentiability is not a large issue in practice. It also coincides with the projection used in earlier literature. It is however susceptible to fading under diffusion for small values of $\sigma$.

Let $u(t,x,y,\theta) = e^{t\Delta_\beta} \tilde I(x,y,\theta)$ be the solution of \eqref{heatflow}. Our main advantage by introducing the lifting $\calL_{\sigma}$ is that we prevent noise and fading of the image under the sub-Riemannian heat flow $e^{t\Delta_\beta}$.

\begin{figure}[H]
    \centering
    \begin{subfigure}[t]{0.3\linewidth}
        \centering
        \includegraphics[width=\linewidth,  clip=true,trim=10px 10px 10px 10px]{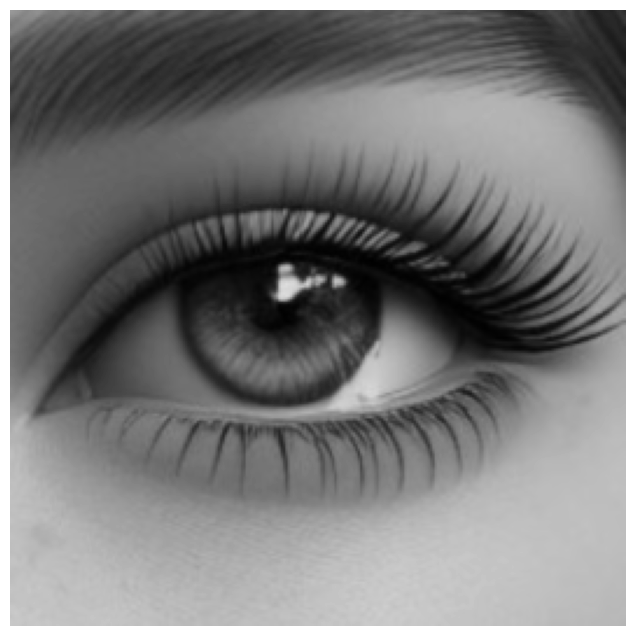}
        \caption{Original image, AI generated}
    \end{subfigure}\;
    \begin{subfigure}[t]{0.3\linewidth}
        \centering
        \includegraphics[width=\linewidth,  clip=true,trim=10px 10px 10px 10px]{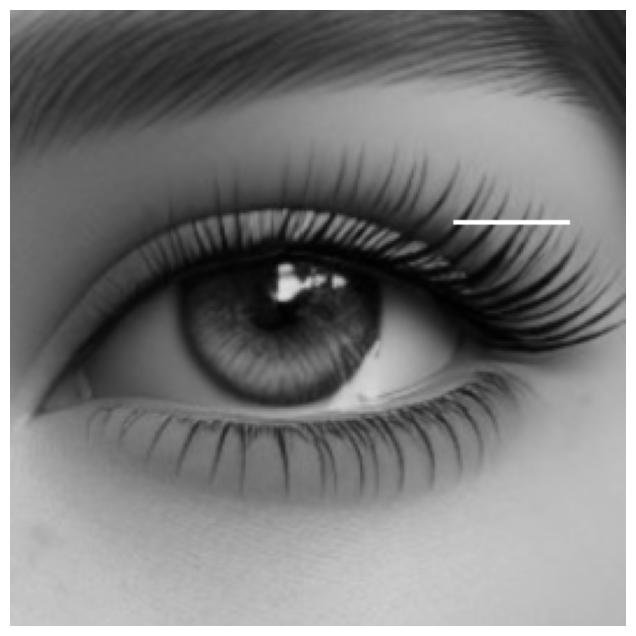}
        \caption{Corrupted image}
    \end{subfigure}\;
    \begin{subfigure}[t]{0.3\linewidth}
        \centering
        \includegraphics[width=\linewidth,  clip=true,trim=10px 10px 10px 10px]{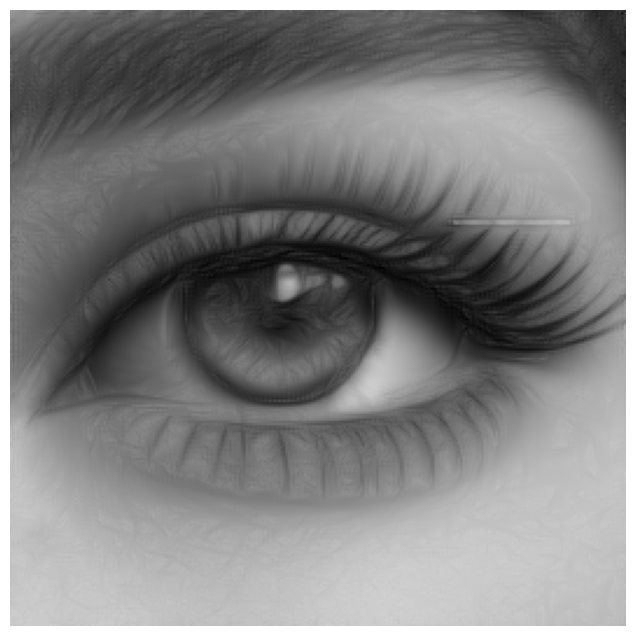}
        \caption{$\sigma = 0.8$, $T=10$, and $\beta=0.25$}
    \end{subfigure}
    \begin{subfigure}[t]{0.3\linewidth}
        \centering
        \includegraphics[width=\linewidth,  clip=true,trim=10px 10px 10px 10px]{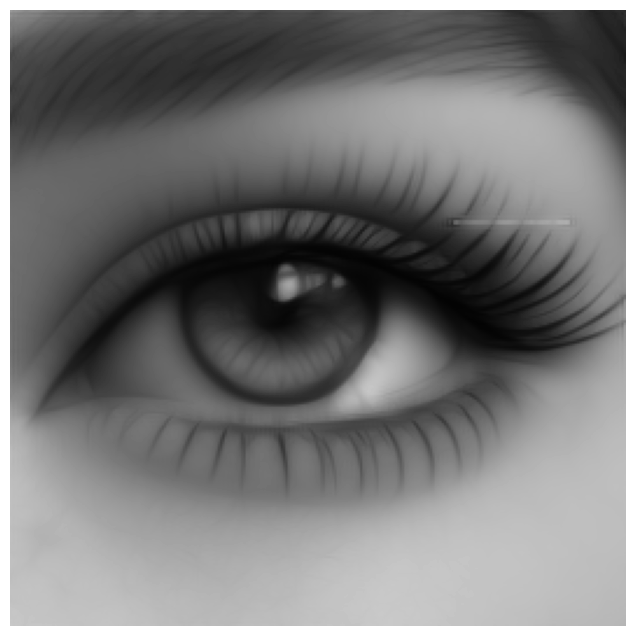}
        \caption{$\sigma = 5$, $T=10$, and $\beta=0.25$}
    \end{subfigure}\;
    \begin{subfigure}[t]{0.3\linewidth}
        \centering
        \includegraphics[width=\linewidth,  clip=true,trim=10px 10px 10px 10px]{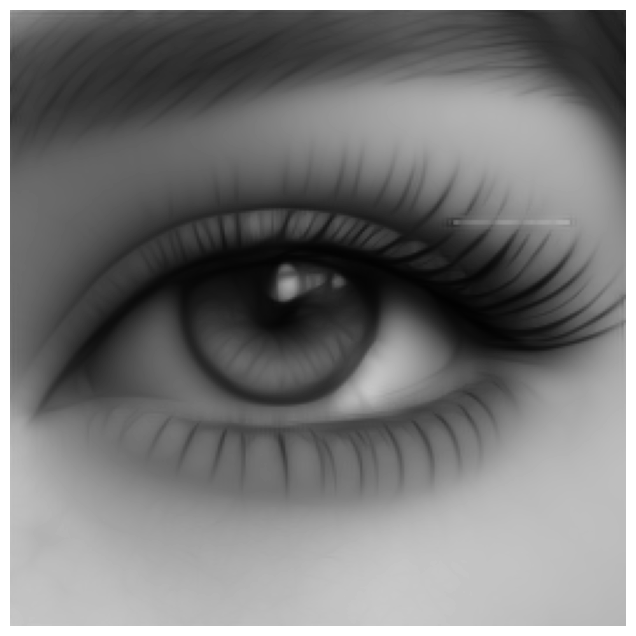}
        \caption{$\sigma = 100$, $T=10$, and $\beta=0.25$}
    \end{subfigure}\;
    \begin{subfigure}[t]{0.3\linewidth}
        \centering
        \includegraphics[width=\linewidth,  clip=true,trim=10px 10px 10px 10px]{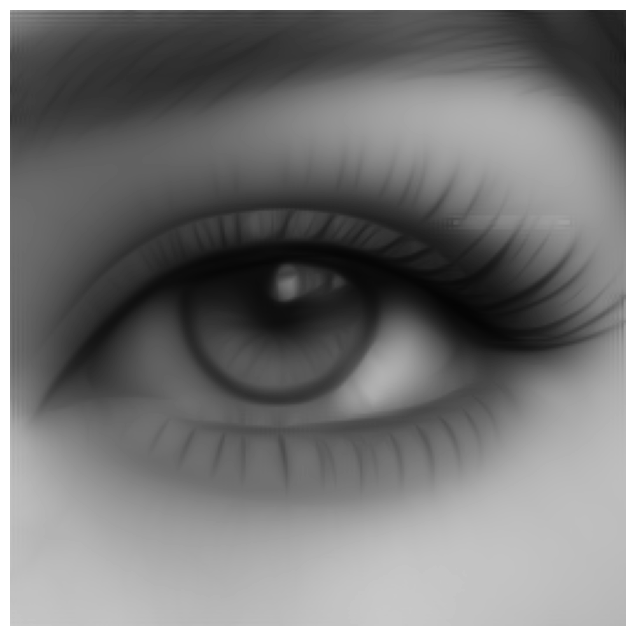}
        \caption{$\sigma = 100$, $T=30$, and $\beta=0.25$}
    \end{subfigure}
    \caption{The original image (a) is lifted with Gaussian lift with different values for $\sigma$. We see that in (c), we quickly incur into fading and noise for small values of $\sigma$. This effect reduces for larger $\sigma$ (d,e,f), even when considering a longer time scale (f).}
    \label{fig:Sigma}
\end{figure}
We see that the resulting function after applying lift in $\calL_\sigma(I)$ is $\pi$-periodic. It is hence sufficent to consider $\theta \in [0,\pi)$, which we can consider as working on $PT\mathbb{R}^2 = \SE(2)/\simeq$. The projection $\Pi_{\sigma}$ can then be defined using an average from $0$ to $\pi$ instead.

\subsection{Preserving details through \WaxOn, \WaxOff}\label{wax_on_wax_off}

As we expect from a diffusion method, the result will not only inpaint along level curves but also blur the image. We can also see this from Figure~\ref{fig:Sigma}~(D). In order to obtain sharper images after a diffusion, we will need to concentrate our image in the direction transverse to our level curves, that is, in the direction of $X_3$. We therefore consider a second sub-Riemannian structure $\{ \tilde \calH, \tilde g_\beta\}$ on $\SE(2)$, such that $X_3$ and $\frac{1}{\sqrt{\beta}} X_2$ forms an orthonormal basis. Let $\tilde \Delta_\beta$ be the operator
\begin{equation} \label{tildeDelta} \tilde \Delta_\beta = X_3^2 + \beta X_2^2 \end{equation}
Using the lift $\calL_\sigma$ as in \eqref{Lsigma}, diffusion $e^{t\tilde \Delta_\beta}$ with $\tilde \Delta_\beta$ corresponds to diffusing transverse to level curves. The effect we are looking for is a reversing of this diffusion.

\begin{figure}[H]
    \centering
    \begin{subfigure}[t]{0.23\linewidth}
        \centering
        \includegraphics[width=\linewidth,  clip=true,trim=10px 10px 10px 10px]{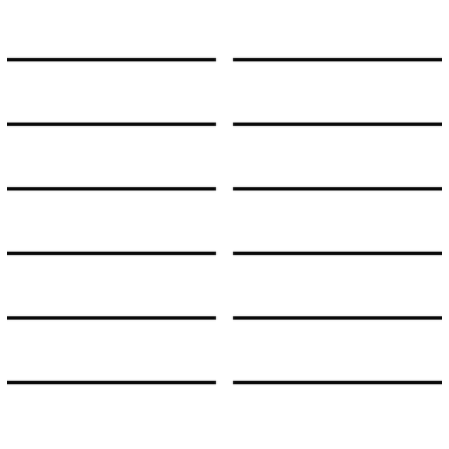}
        \caption{Original image}
    \end{subfigure}\;
    \begin{subfigure}[t]{0.23\linewidth}
        \centering
        \includegraphics[width=\linewidth,  clip=true,trim=10px 10px 10px 10px]{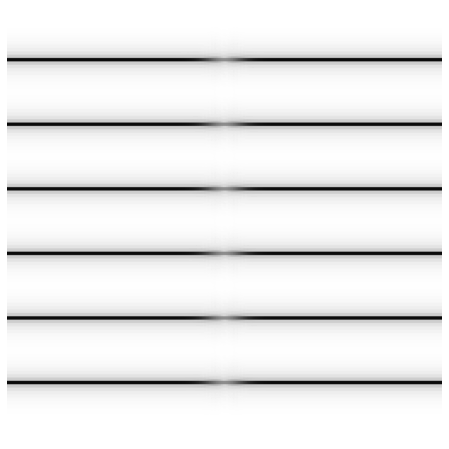}
        \caption{$\Delta_0 = X_1^2$}
    \end{subfigure}\;
    \begin{subfigure}[t]{0.23\linewidth}
        \centering
        \includegraphics[width=\linewidth,  clip=true,trim=10px 10px 10px 10px]{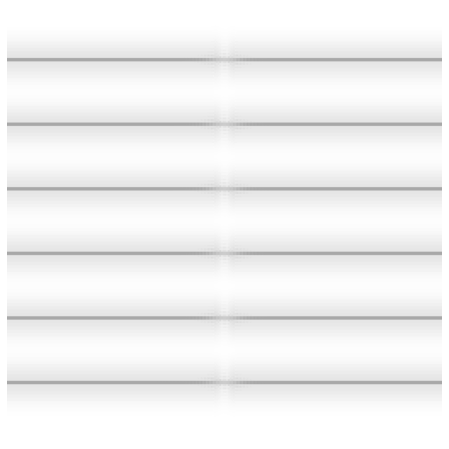}
        \caption{$\tilde \Delta_0 = X_3^2$}
    \end{subfigure}\;
    \begin{subfigure}[t]{0.23\linewidth}
        \centering
        \includegraphics[width=\linewidth,  clip=true,trim=10px 10px 10px 10px]{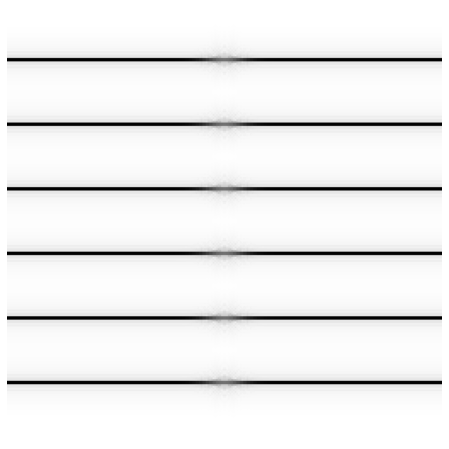}
        \caption{\WaxOn-\WaxOff}
    \end{subfigure}
    \caption{The original image (a) is lifted with Gaussian lift. Diffusion along $X_1$ is applied to obtain (b), while diffusion along $X_3$ yields (c). The \WaxOn-\WaxOff \, algorithm is used to obtain (d), which succeeds in connecting the lines while maintaining the overall image sharp}
\end{figure}

From this argument, we propose a new algorithm for image restoration: assuming that for small $T$ we can recover the initial profile of neural activity by reversing the $\tilde \Delta_\beta$-diffusion PDE, we can sharpen the restored image. Thus we first ``put the wax on'', diffusing the image alongside the level lines, and then ``get the wax off'', sharpening the image along $X_3$. By alternating between applying \WaxOn, that is $e^{t\Delta_\beta}$ for completion of level lines, then applying \WaxOff, that is $e^{-t\tilde \Delta_\beta}$, to concentrate the image on these lines.

\begin{figure}[H]
	\centering
	\begin{subfigure}[t]{0.32\linewidth}
		\begin{tikzpicture}[scale=0.75]
			\begin{axis}[axis equal image, restrict x to domain=0:1, restrict y to domain=0:1]
				\addplot[variable=t,mesh,domain=0.103:0.897, color=white] (t,{(2*(t-1/2))^3+1/2});
				
				\addplot[variable=t,mesh,domain=0.14:0.4, color=black] (t,{(2*(t-1/2))^3+1/2});
				\addplot[variable=t,mesh,domain=0.6:0.86, color=black] (t,{(2*(t-1/2))^3+1/2});
				
				\addplot[variable=t,mesh,domain=0:0.25, color=red, samples=10] ({0.4+t},{0.492+0.24*t});
				\addplot[variable=t,mesh,domain=0:0.25, color=red, samples=10] ({0.6-t},{0.508-0.24*t});
				\foreach \k in {0.6,0.9, ...,1}{
					\addplot[variable=t,mesh,domain=0:{0.25/\k}, color=red, samples=10] ({0.4+\k^2*(sin(deg(t-0.235))+0.232)},{0.492+\k^2*(cos(deg(t-0.235))-1+0.027)});
					\addplot[variable=t,mesh,domain=0:{0.25/\k}, color=red, samples=10] ({0.4+\k^2*(sin(deg(t+0.235))-0.232)},{0.492+\k^2*(-cos(deg(t+0.235))+1-0.027)});
					
					\addplot[variable=t,mesh,domain=0:{0.25/\k}, color=red, samples=10] ({0.6-\k^2*(sin(deg(t-0.235))+0.232)},{0.508-\k^2*(cos(deg(t-0.235))-1+0.027)});
					\addplot[variable=t,mesh,domain=0:{0.25/\k}, color=red, samples=10] ({0.6-\k^2*(sin(deg(t+0.235))-0.232)},{0.508-\k^2*(-cos(deg(t+0.235))+1-0.027)});
				}
				
			\end{axis}
		\end{tikzpicture}
		\subcaption{$\WaxOn$}
	\end{subfigure}\hfil
	\begin{subfigure}[t]{0.32\linewidth}
		\begin{tikzpicture}[scale=0.75]
			\begin{axis}[axis equal image, restrict x to domain=0:1, restrict y to domain=0:1]
				\addplot[variable=t,mesh,domain=0.103:0.897, color=white] (t,{(2*(t-1/2))^3+1/2});

				\addplot[variable=t,mesh,domain={-0.25}:{0.25}, color=green, samples=10] ({1/2},{1/2+t});
				\foreach \k in {0.6,0.9, ...,1}{
					\addplot[variable=t,mesh,domain=0:{0.25/\k}, color=green, samples=10] ({1/2+\k^2*(cos(deg(t))-1)},{1/2+\k^2*sin(deg(t))});
					\addplot[variable=t,mesh,domain=0:{0.25/\k}, color=green, samples=10] ({1/2+\k^2*(-cos(deg(t))+1)}, {1/2+\k^2*sin(deg(t))});
					\addplot[variable=t,mesh,domain=0:{0.25/\k}, color=green, samples=10] ({1/2+\k^2*(-cos(deg(t))+1)}, {1/2-\k^2*sin(deg(t))});
					\addplot[variable=t,mesh,domain=0:{0.25/\k}, color=green, samples=10] ({1/2+\k^2*(cos(deg(t))-1)}, {1/2-\k^2*sin(deg(t))});
				}
				
				\addplot[variable=t,mesh,domain=0.351:0.649, color=gray1] (t,{(2*(t-1/2))^3+1/2 + 0.09*(exp(-(1/(1-((0.5-t)*2/0.3)^2))))});
				\addplot[variable=t,mesh,domain=0.351:0.649, color=gray2] (t,{(2*(t-1/2))^3+1/2 + 0.06*(exp(-(1/(1-((0.5-t)*2/0.3)^2))))});
				\addplot[variable=t,mesh,domain=0.351:0.649, color=gray3] (t,{(2*(t-1/2))^3+1/2 + 0.03*(exp(-(1/(1-((0.5-t)*2/0.3)^2))))});
				
				\addplot[variable=t,mesh,domain=0.351:0.649, color=gray1] (t,{(2*(t-1/2))^3+1/2 - 0.09*(exp(-(1/(1-((0.5-t)*2/0.3)^2))))});
				\addplot[variable=t,mesh,domain=0.351:0.649, color=gray2] (t,{(2*(t-1/2))^3+1/2 - 0.06*(exp(-(1/(1-((0.5-t)*2/0.3)^2))))});
				\addplot[variable=t,mesh,domain=0.351:0.649, color=gray3] (t,{(2*(t-1/2))^3+1/2 - 0.03*(exp(-(1/(1-((0.5-t)*2/0.3)^2))))});
				
				\addplot[variable=t,mesh,domain=0.35:0.65, color=gray5] (t,{(2*(t-1/2))^3+1/2});

				\addplot[variable=t,mesh,domain=0.14:0.35, color=black] (t,{(2*(t-1/2))^3+1/2});
				\addplot[variable=t,mesh,domain=0.65:0.86, color=black] (t,{(2*(t-1/2))^3+1/2});
			\end{axis}
		\end{tikzpicture}
		\subcaption{$\WaxOff$}
	\end{subfigure}\hfil
	\begin{subfigure}[t]{0.32\linewidth}
		\begin{tikzpicture}[scale=0.75]
			\begin{axis}[axis equal image, restrict x to domain=0:1, restrict y to domain=0:1]
				\addplot[variable=t,mesh,domain=0.103:0.897, color=white] (t,{(2*(t-1/2))^3+1/2});
				
				\addplot[variable=t,mesh,domain=0.351:0.649, color=gray1] (t,{(2*(t-1/2))^3+1/2 + 0.045*(exp(-(1/(1-((0.5-t)*2/0.3)^2))))});
				\addplot[variable=t,mesh,domain=0.351:0.649, color=gray2] (t,{(2*(t-1/2))^3+1/2 + 0.03*(exp(-(1/(1-((0.5-t)*2/0.3)^2))))});
				\addplot[variable=t,mesh,domain=0.351:0.649, color=gray3] (t,{(2*(t-1/2))^3+1/2 + 0.015*(exp(-(1/(1-((0.5-t)*2/0.3)^2))))});
				
				\addplot[variable=t,mesh,domain=0.351:0.649, color=gray1] (t,{(2*(t-1/2))^3+1/2 - 0.045*(exp(-(1/(1-((0.5-t)*2/0.3)^2))))});
				\addplot[variable=t,mesh,domain=0.351:0.649, color=gray2] (t,{(2*(t-1/2))^3+1/2 - 0.03*(exp(-(1/(1-((0.5-t)*2/0.3)^2))))});
				\addplot[variable=t,mesh,domain=0.351:0.649, color=gray3] (t,{(2*(t-1/2))^3+1/2 - 0.015*(exp(-(1/(1-((0.5-t)*2/0.3)^2))))});
				
				\addplot[variable=t,mesh,domain=0.35:0.65, color=gray5] (t,{(2*(t-1/2))^3+1/2});
				
				\addplot[variable=t,mesh,domain=0.14:0.35, color=black] (t,{(2*(t-1/2))^3+1/2});
				\addplot[variable=t,mesh,domain=0.65:0.86, color=black] (t,{(2*(t-1/2))^3+1/2});
			\end{axis}
		\end{tikzpicture}
		\subcaption{Final result}
	\end{subfigure}
	\caption{Sketch of intuition behind \WaxOn-\WaxOff} 
\end{figure}
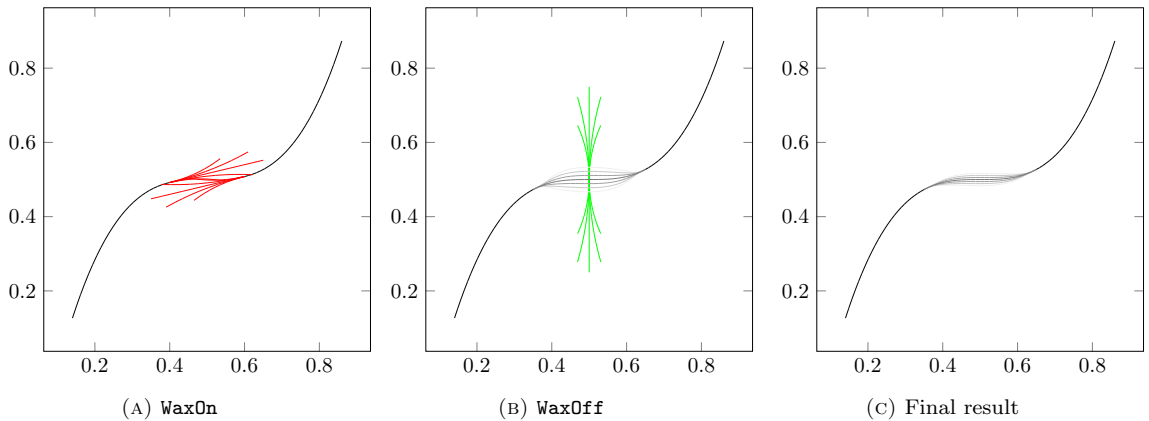

Although we are formally writing $e^{-t\tilde \Delta_\beta}$, the heat flow is an irreversible process and the solution of the heat equation for $t<0$ is not well defined (\cite{Evans2010}, \cite{Strauss2007}).
This is an ill-posed problem as we in general do not have stability with respect to initial data. See e.g. \cite{weber1981analysis} and \cite[Chapter~8.2]{kabanikhin2011inverse} for details. 
Reversing the effect of the heat flow is susceptible to noise and will eventually diverge ``blowing up''. In practice, the \WaxOff needs to be run only up to a small time $T_2$ before any blowup happens.

By applying the two steps repeatedly, the \WaxOn portion of the algorithm yields restoration while the \WaxOff portion sharpens the result while retaining restored information.

Our rule of thumb from experiments is that running \WaxOn for a time $T$, followed by applying \WaxOff up until a time $T_2 = T/8$ yields empirically good results. The result is more stable for larger $\beta$, say $\beta =2$, compared to smaller $\beta$-values.

\begin{figure}[H]
    \centering
    \begin{subfigure}[t]{0.23\linewidth}
        \centering
        \includegraphics[width=\linewidth,  clip=true,trim=10px 10px 10px 10px]{eye_sigma_100.png}
        \caption{Classical inpainting as described above, $T=10$ and $\beta=0.25$}
    \end{subfigure}\;
    \begin{subfigure}[t]{0.23\linewidth}
        \centering
        \includegraphics[width=\linewidth,  clip=true,trim=10px 10px 10px 10px]{eye_sigma_100_T30.png}
        \caption{Classical inpainting as described above, $T=30$ and $\beta=0.25$}
    \end{subfigure}\;
    \begin{subfigure}[t]{0.23\linewidth}
        \centering
        \includegraphics[width=\linewidth,  clip=true,trim=10px 10px 10px 10px]{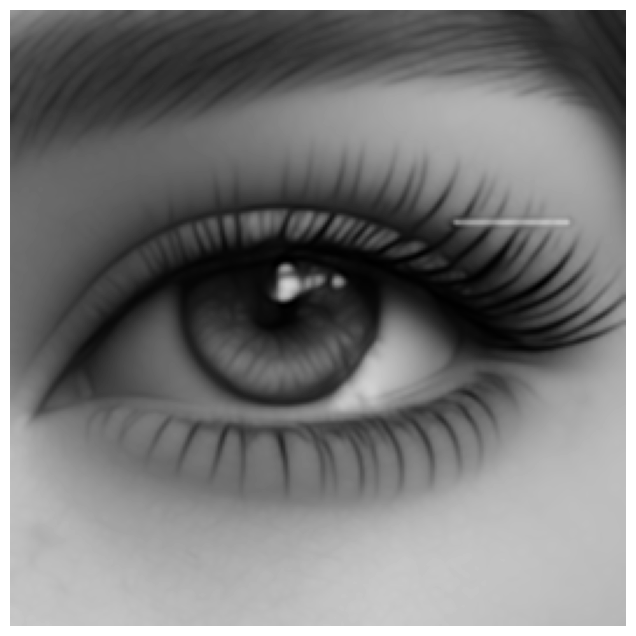}
        \caption{1 step of \WaxOn-\WaxOff, $\beta=0.25$ for \WaxOn and $\beta=5$ for \WaxOff}
    \end{subfigure}\;
    \begin{subfigure}[t]{0.23\linewidth}
        \centering
        \includegraphics[width=\linewidth,  clip=true,trim=10px 10px 10px 10px]{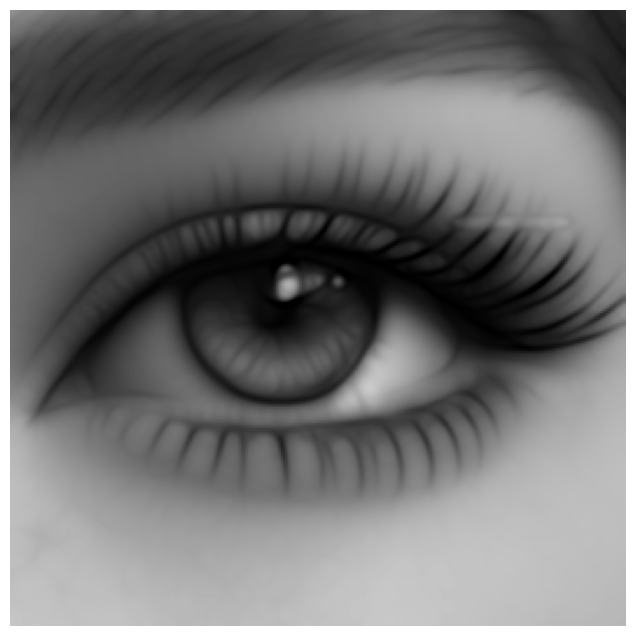}
        \caption{3 steps of \WaxOn-\WaxOff, $\beta=0.25$ for \WaxOn and $\beta=5$ for \WaxOff}
    \end{subfigure}
    \caption{The usual algorithm (our implementation) is applied to obtain (a) and (b). One iteration of \WaxOn-\WaxOff for small $T_2$ produces (c) while multiple iterations of \WaxOn-\WaxOff are sequentially applied to produce (d) which achieves a similar level of restoration to (b) but maintains an overall sharper image.}
\end{figure}

\begin{remark} There are several suggested methods for stabilizing solutions for the inverse heat conduction, which might increase the time we could apply \WaxOff. We refer to \cite{weber1981analysis,wang2010numerical,beck1970nonlinear, liu1996stability,beck1996comparison,alifanov1995extreme} as examples of such methods.
\end{remark}

\subsection{Unsharp masking}
\label{sec:WWunsharp}
The need to recover sharp images from blurred ones has been of interest in photography long before the invention of digital computers. In a photographic darkroom, this result can be physically achieved by copying the original glass-plate negative of the image, blurring it intentionally, and producing a scaled negative of it. If the two glass plates are now stacked one in front of the other and light is passed through both, the resulting image will see low-frequency information reduced while high-frequency information (acutance) enhanced.

In digital image processing, where glass-plate images are now arrays (matrices) and blurring is convolution with a normal distribution $G$ of mean $\mu=0$ and standard deviation $s$, such technique takes the form 
\[I \mapsto I+C(I- I\ast G_s)\]
where $I$ is the digital image, $G_s$ is the normal distribution, and $C\in\mathbb{R}$ is the sharpening factor \cite{Gonzalez2008}.
\begin{figure}[H]
    \centering
    \begin{subfigure}[t]{0.23\textwidth}
    \includegraphics[width=\textwidth]{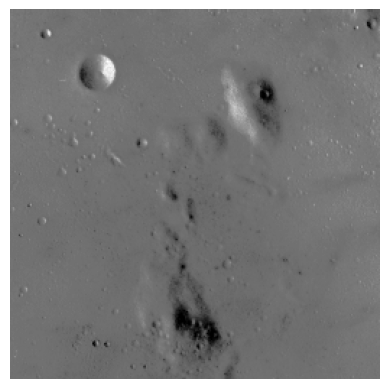}
    \caption{Original image, courtesy of \cite{vanderWalt2014}}
    \end{subfigure}
    \begin{subfigure}[t]{0.23\textwidth}
    \includegraphics[width=\textwidth]{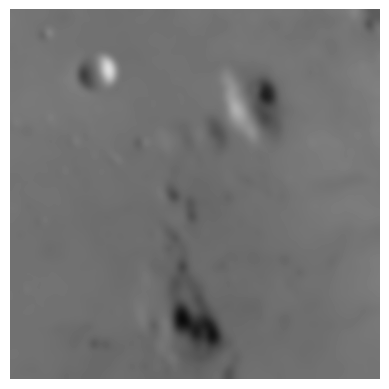}
    \caption{Gaussian blur with $\sigma=5$ is applied}
    \end{subfigure}
    \begin{subfigure}[t]{0.23\textwidth}
    \includegraphics[width=\textwidth]{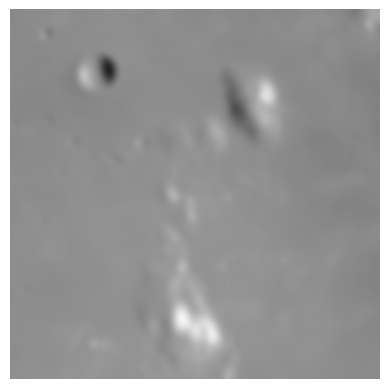}
    \caption{Negative of the blurred image}
    \end{subfigure}
    \begin{subfigure}[t]{0.23\textwidth}
    \includegraphics[width=\textwidth]{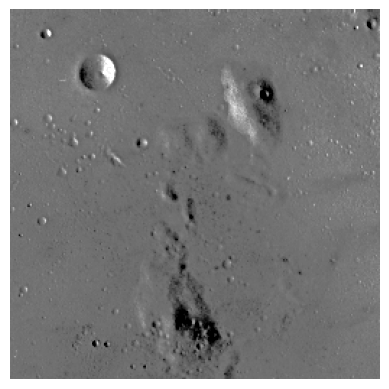}
    \caption{Sharpened image with $C=1$}
    \end{subfigure}
    \caption{Example of usage of the unsharp filter applied to a low-contrast image of the surface of the moon}
  \label{fig:unsharp_filter}
\end{figure}

If we consider kernels of size 3, this takes the form of a convolution with matrices of the form
\[\begin{bmatrix}0&0&0\\0&1&0\\0&0&0\end{bmatrix} + C\left(\begin{bmatrix}0&0&0\\0&1&0\\0&0&0\end{bmatrix} -\frac{1}{C}\begin{bmatrix}0&1&0\\1&1&1\\0&1&0\end{bmatrix} \right) = \begin{bmatrix}0&-1&0\\-1&C&-1\\0&-1&0\end{bmatrix} \]
The case $C=5$ is the usual unsharp filter of dimension 3, commonly used in image processing applications.

It comes quite naturally to consider an extension of unsharp masking to other domains, such as $\SE(2)$, to produce a curvature-sensitive sharpening filter aimed at enhancing digital images according to the mechanisms of the visual cortex V1.

The effect of the undesired blurring is therefore the solution at time $T$ to the Cauchy problem
\[\tilde \Delta_\beta= X_3^2 + \beta X_2^2 \qquad
\begin{cases}
    \partial_t u = \tilde \Delta_\beta u,\\
    u(0,x,y,\theta) = \tilde I (x,y,\theta)
\end{cases}\]
Denote such solution by $\tilde I_T (x, y, \theta) = u(T,x,y,\theta)$. Then the unsharp masking in $\SE(2)$ takes the form $\tilde I + C(\tilde I -  \tilde I_T)$ for a sharpening factor $C$.

\begin{figure}[H]
    \centering
    \begin{subfigure}[t]{0.3\linewidth}
        \centering
        \includegraphics[width=\linewidth]{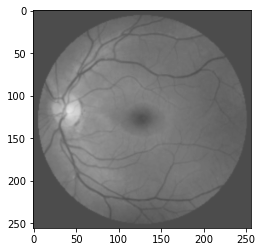}
        \caption{Original image courtesy of \cite{Hggstrm2014}}
    \end{subfigure}\;
    \begin{subfigure}[t]{0.3\linewidth}
        \centering
        \includegraphics[width=\linewidth]{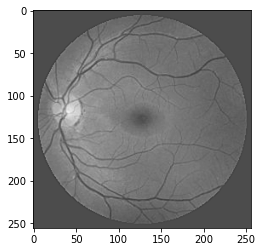}
        \caption{$\mathbb R^2$ classical unsharp filter}
    \end{subfigure}\;
    \begin{subfigure}[t]{0.3\linewidth}
        \centering
        \includegraphics[width=\linewidth]{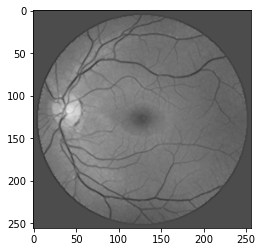}
        \caption{Proposed filter on $\SE(2)$}
    \end{subfigure}
    \caption{Retinal image (a) sharpened using the classical unsharp filter over $\mathbb R^2$ (b) and using the proposed sharpening method, after projection (c).}
\end{figure}

Defining an extension of the unsharp filter over $\SE(2)$ rather than $\mathbb R^2$ allows us to work with a lifted image in its natural domain rather than alternating projections and lifts. Combining the orientation-sensitive diffusion equation with unsharp filtering provides an effective tool to enhance the contrast of particularly predisposed images such as retinal scans, or used as a preprocessing step in a more complex pipeline.

\begin{figure}[H]
    \centering
    \begin{subfigure}[t]{0.19\linewidth}
        \centering
        \includegraphics[width=\linewidth]{retinal_original.png}
        \caption{Original image, courtesy of \cite{Hggstrm2014}}
    \end{subfigure}
    \begin{subfigure}[t]{0.19\linewidth}
        \centering
        \includegraphics[width=\linewidth]{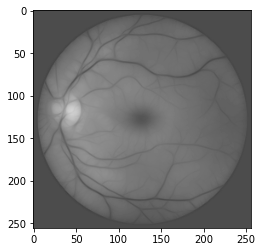}
        \caption{$C=0.5$}
    \end{subfigure}
    \begin{subfigure}[t]{0.19\linewidth}
        \centering
        \includegraphics[width=\linewidth]{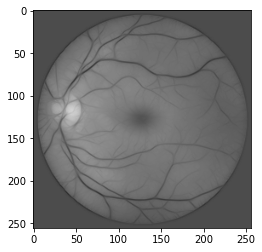}
        \caption{$C=1$}
    \end{subfigure}
    \begin{subfigure}[t]{0.19\linewidth}
        \centering
        \includegraphics[width=\linewidth]{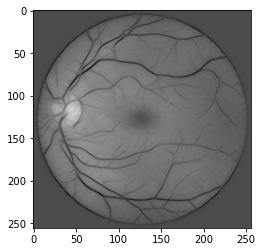}
        \caption{$C=1.5$}
    \end{subfigure}
    \begin{subfigure}[t]{0.19\linewidth}
        \centering
        \includegraphics[width=\linewidth]{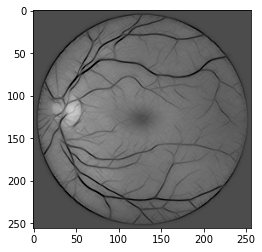}
        \caption{$C=2$}
    \end{subfigure}
    \caption{The original image (a) processed under $\Delta_\beta = X_1^2 + \beta^2 X_2^2$ and sharpened with varying coefficient for $C$ (subfigures b,c,d,e)}
\end{figure}

\subsection{\WaxOn-\WaxOff in the AHE algorithm}
Boscain, Chertoviskih, Gauthier, Prandi, and Remizov have introduced the AHE algorithm  \cite{Boscain2018-qd} as a strikingly powerful algorithm for image restoration when dealing with a variation of the problem in which the position of the corruption is known. The AHE algorithm is composed of 4 steps: simple averaging, strong diffusion, advanced averaging, and weak diffusion. Simple averaging uses the known mask to repeatedly fill the boundary of the corrupted portion of the image (1 pixel boundary at a time) with a local average on the non-corrupted portion of the image, proceeding in a way that recalls the BFS algorithm. Both strong diffusion and weak diffusion correspond to diffusion with varying (positive) coefficients $\Delta = a(x,y)X_1^2 + b(x,y)X_2^2$ where the coefficients in the weak diffusion are smaller than in the strong diffusion. Advanced averaging performs an average between the original image and the strongly diffused one, sharpening the image but also reintroducing the "mosaic effect" that was attenuated during strong smoothing. For more details consult \cite{Boscain2018-qd}.

\begin{algorithm}[H]
	\caption{AHE as presented in \cite{Boscain2018-qd}}\label{AHE}
	\begin{algorithmic}[1]
		\Procedure{AHE}{$I, M,T_1,T_2$}
		\State $I \gets FillMask(I,M)$ \Comment{Simple averaging as in \cite{Boscain2018-qd}}
		\State $J\gets Lift(I)$\Comment{Lift}
		\State $J\gets StrongDiffusion(J,I, T_1)$
		\State $I \gets Proj(J)$\Comment{Projection}
		\State $I\gets AdvancedAvg(I,M)$\Comment{Advanced averaging as in \cite{Boscain2018-qd}}
		\State $J\gets Lift(I)$\Comment{Lift}
		\State $J\gets WeakDiffusion(J,I, T_2)$
		\State $I \gets Proj(J)$\Comment{Projection}
		\State \textbf{return} $I$\Comment{The reconstructed image is $I$}
		\EndProcedure
	\end{algorithmic}
\end{algorithm}

We propose an enriched version of this algorithm which produces a sharper final result with higher contrast. 

\begin{algorithm}[H]
	\caption{Modified AHE with WaxOn-WaxOff}\label{AHEWax}
	\begin{algorithmic}[1]
		\Procedure{ModifiedAHE}{$I, M, n,T_1,T_2, T_3, T_4,sf$}
		\State $I \gets FillMask(I,M)$ \Comment{Simple averaging as in \cite{Boscain2018-qd}}
		\State $i\gets 0$
		\For{$i < n$}\Comment{Repeat WaxOn-WaxOff $n$ times}
		\State $J\gets Lift(I)$\Comment{Gaussian lift}
		\State $J\gets StrongWaxOnWaxOff(J,I, T_1, T_2)$ \Comment{As in \ref{wax_on_wax_off}, with varying coefficients}
		\State $I \gets Proj(J)$\Comment{Projection}
		\State $I\gets AdvancedAvg(I,M)$\Comment{Advanced averaging as in \cite{Boscain2018-qd}}
		\State $J\gets Lift(I)$\Comment{Gaussian lift}
		\State $J\gets WeakWaxOnWaxOff(J,I, T_3, T_4)$ \Comment{As in \ref{wax_on_wax_off}, with varying coefficients}
		\State $I \gets Proj(J)$\Comment{Projection}
		\State $I \gets Sharpen(I,sf)$
		\State $i \gets i+1$
		\EndFor\label{euclidendwhile}
		\State \textbf{return} $I$\Comment{The reconstructed image is $I$}
		\EndProcedure
	\end{algorithmic}
\end{algorithm}

In our implementation, the images are lifted via Gaussian lift and subsequently, the \WaxOn-\WaxOff procedure is applied, with $SE(2)$-unsharp used as \WaxOff, together with advanced averaging as introduced in \cite{Boscain2018-qd}. These two operations are performed repeatedly removing at each step the contour of the corrupted portion from the mask according to 4-point connectivity. The procedure is run until the mask is exhausted. In this way the images after smoothing become progressively sharper, as the mask is reduced step by step and advanced average is performed on a progressively decreasing number of pixels. 

\begin{figure}[H]
	\centering
    \hfil
    \begin{subfigure}[t]{0.23\linewidth}
        \centering
        \includegraphics[width=\linewidth,  clip=true,trim=10px 10px 10px 10px]{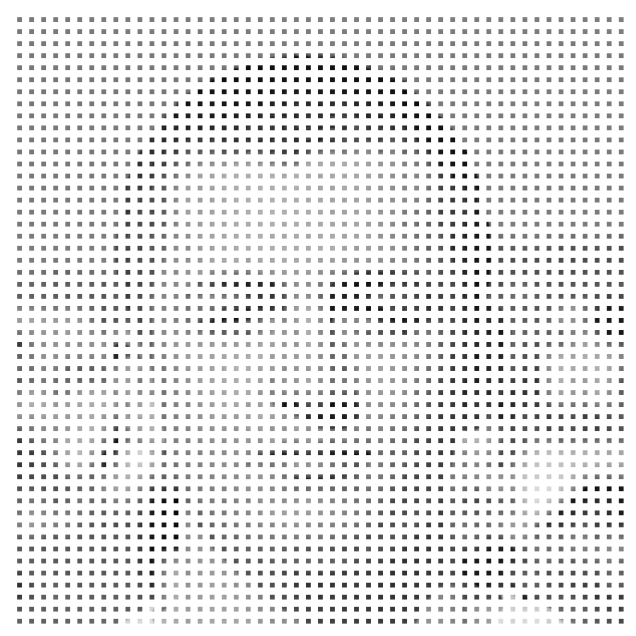}
        \caption{Masked image}
    \end{subfigure}\;
    \begin{subfigure}[t]{0.23\linewidth}
        \centering
        \includegraphics[width=\linewidth,  clip=true,trim=10px 10px 10px 10px]{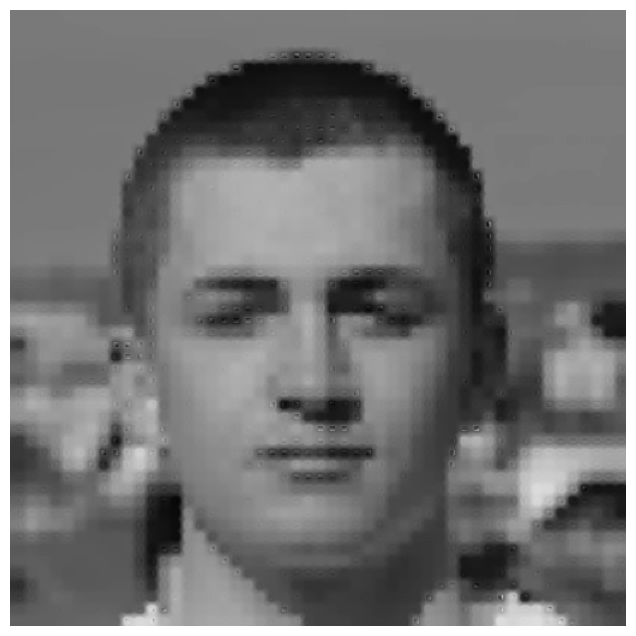}
        \caption{Masked region is filled according to \cite{Boscain2018-qd}}
    \end{subfigure}\;
    \begin{subfigure}[t]{0.23\linewidth}
        \centering
        \includegraphics[width=\linewidth,  clip=true,trim=10px 10px 10px 10px]{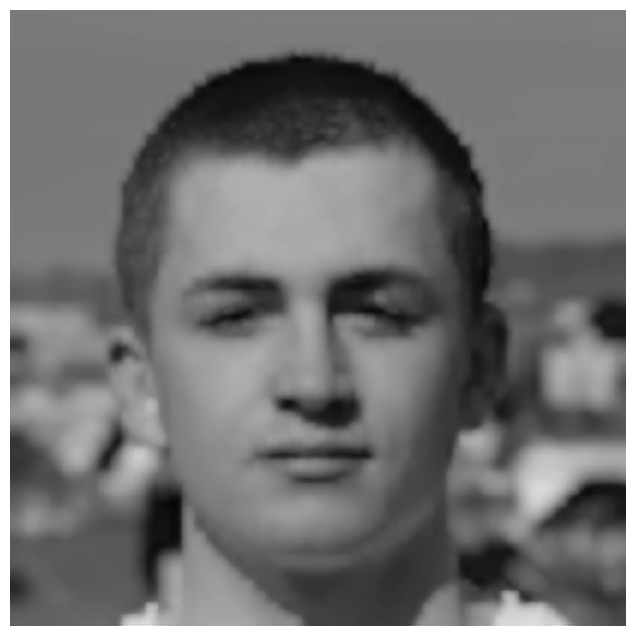}
        \caption{Original AHE}
    \end{subfigure}\;
    \begin{subfigure}[t]{0.23\linewidth}
        \centering
        \includegraphics[width=\linewidth,  clip=true,trim=10px 10px 10px 10px]{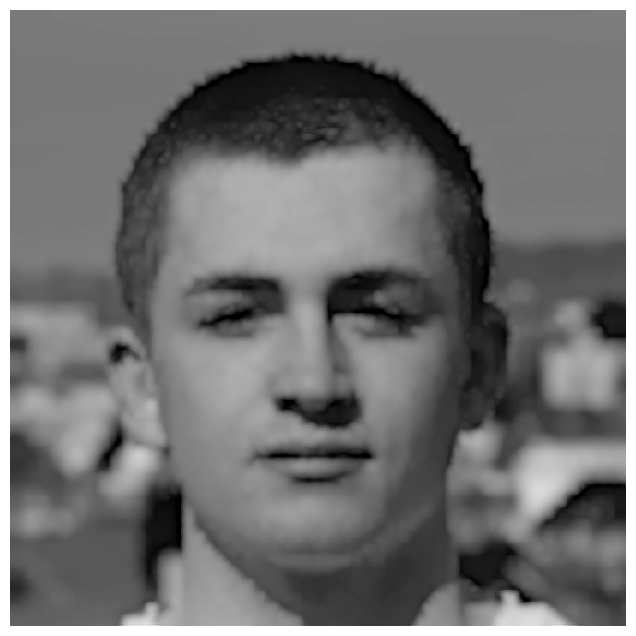}
        \caption{Our AHE}
    \end{subfigure}
    \hfil
    \newline
    \hfil
    \begin{subfigure}[t]{0.23\linewidth}
        \centering
        \includegraphics[width=\linewidth,  clip=true,trim=10px 10px 10px 10px]{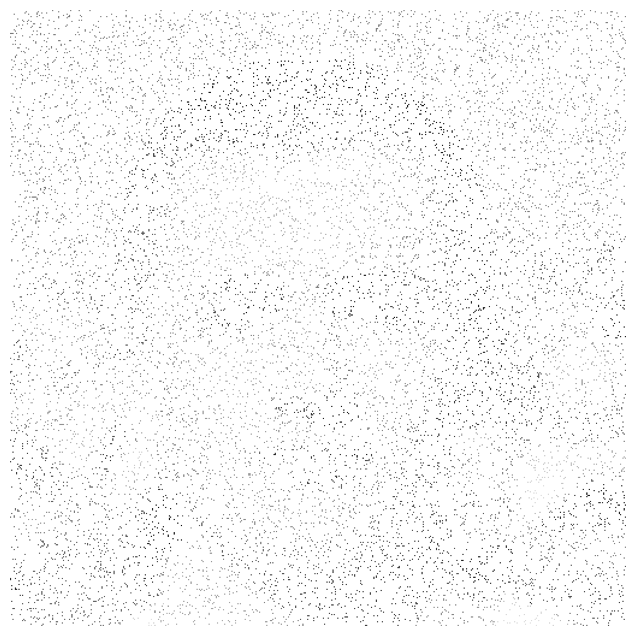}
        \caption{Masked image with 95\% uniform corruption}
    \end{subfigure}\;
    \begin{subfigure}[t]{0.23\linewidth}
        \centering
        \includegraphics[width=\linewidth,  clip=true,trim=10px 10px 10px 10px]{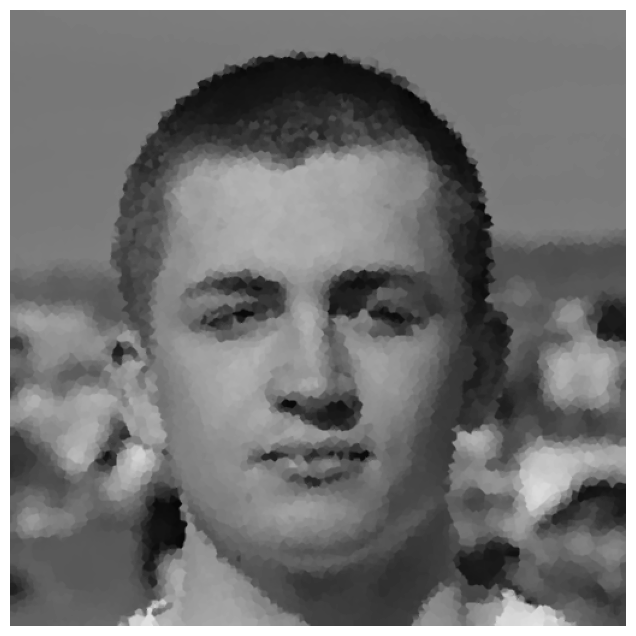}
        \caption{Masked region is filled according to \cite{Boscain2018-qd}}
    \end{subfigure}\;
    \begin{subfigure}[t]{0.23\linewidth}
        \centering
        \includegraphics[width=\linewidth,  clip=true,trim=10px 10px 10px 10px]{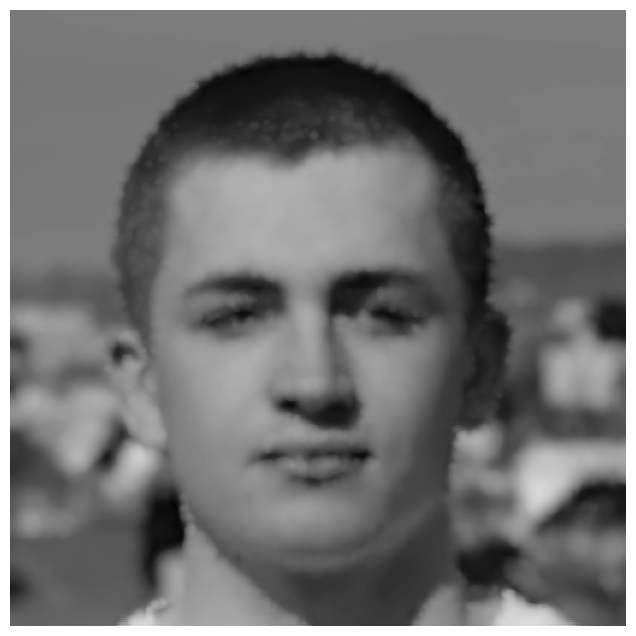}
        \caption{Original AHE}
    \end{subfigure}\;
    \begin{subfigure}[t]{0.23\linewidth}
        \centering
        \includegraphics[width=\linewidth,  clip=true,trim=10px 10px 10px 10px]{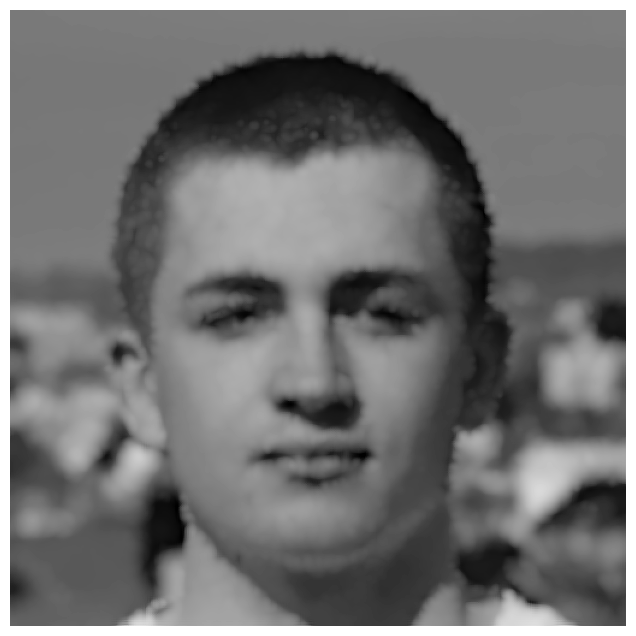}
        \caption{Our AHE}
    \end{subfigure}
    \hfil
    \caption{Example of image restoration with a known mask using AHE algorithm enriched by Gaussian lift and \WaxOn-\WaxOff. The last column shows a result with a higher contrast, due to the unsharp filter applied in the procedure.}
    \label{}
\end{figure}

\appendix
\renewcommand{\thesection}{\Alph{section}} 
\makeatletter
\renewcommand\@seccntformat[1]{\appendixname\ \csname the#1\endcsname.\hspace{0.5em}}
\makeatother

\section{Geometric preliminaries} \label{sec:Prelim}
\subsection{Sub-Riemannian manifolds} \label{sec:SRmanifold}
To introduce the reader who is unfamiliar with sub-Riemannian geometry we give here an introduction of the main concepts and results from the field. For a more in-depth study on the subject, we redirect to \cite{Montgomery2006}. 

\begin{definition}
	A sub-Riemannian manifold is a triplet $(M, \mathcal{H}, g)$ with $M$ being a connected manifold, $\mathcal H \subset TM$ a linear subbundle and $g=\langle\cdot, \cdot\rangle$ a fiber-metric defined on the subbundle~$\calH$.
\end{definition}
We call $\mathcal{H}\subset TM$ in this definition the \emph{horizontal distribution}. A sub-Riemannian manifold can be considered as a limiting case of a Riemannian manifold where the distances of vectors outside of $\calH$ approach infinity. Curves $\gamma:[a,b] \to M$ with a finite length will then need to be \emph{a horizontal curve}: an absolutely continuous curve satisfying $\dot \gamma(t) \in \calH_{\gamma(t)}$ for almost every $t$. For such a curve, we can define its length by
$$\length(\gamma) = \int_a^b \langle \dot \gamma(t), \dot \gamma(t) \rangle^{1/2} \, dt.$$
We can then also introduce the corresponding sub-Riemannian distance by
$$d_g(x,y) = \inf \left\{ \length(\gamma) \; \Big| \; \begin{array}{c}
\text{$\gamma:[a,b]\to M$ horizontal} \\ \gamma(a) = x, \gamma(b) = y \end{array}
\right\}.$$
In general, there might not be any curve connecting a point $x$ and $y$, meaning that the distance above will be infinite. It is therefore typical to require the horizontal bundle $\calH$ to be bracket-generating.
Let $\mathfrak{X}_{\calH}$ be all vector fields taking values in the subbundle $\calH$. We then define $\hat{\frakX}_{\calH} \supseteq \frakX_{\calH}$ as the space of all vector fields generated by those in $\frakX_{\calH}$ and their iterated Lie brackets. In other words
$$\hat{\frakX}_{\calH} = \spn \left\{ [X_{i_1}, [X_{i_2} ,[ \cdots [X_{l-1}, X_l] ] \cdots ]] \; | \; X_{i_j} \in \frakX_{\calH}, l = 1,2, 3, \dots, \right\},$$
where we interpret the case $l =1$ simply as the vector field $X_{i_1}$ itself.
\begin{definition}
We say that $\calH$ is bracket-generating if for every $x \in M$,
$$T_xM = \{ X(x) \, : \, X \in \hat{\frakX}_{\calH} \}.$$
\end{definition}
In other words, $\calH$ is bracket generating if we can make a partial derivative in any direction we want by combining directions in $\calH$. By the Chow-Rashevski\"i theorem \cite{Ras38,Cho39}, any two points in a sub-Riemannian manifold can be connected by a horizontal curve if $\calH$ is bracket-generating. Furthermore, $d_g$ will be a well-defined metric distance that has the same open sets as on the original manifold.
\bibliographystyle{abbrv}
\bibliography{bibliography_clean}

\begin{thebibliography}{10}

\bibitem{alifanov1995extreme}
O.~M. Alifanov, E.~Artioukhine, and S.~Rumyantsev.
\newblock {\em Extreme methods for solving ill-posed problems with applications
  to inverse heat transfer problems}.
\newblock Begell house, 1995.

\bibitem{beck1970nonlinear}
J.~V. Beck.
\newblock Nonlinear estimation applied to the nonlinear inverse heat conduction
  problem.
\newblock {\em International Journal of heat and mass transfer},
  13(4):703--716, 1970.

\bibitem{beck1996comparison}
J.~V. Beck, B.~Blackwell, and A.~Haji-Sheikh.
\newblock Comparison of some inverse heat conduction methods using experimental
  data.
\newblock {\em International Journal of Heat and Mass Transfer},
  39(17):3649--3657, 1996.

\bibitem{Bekkers2014Jul}
E.~Bekkers, R.~Duits, T.~Berendschot, and B.~ter Haar~Romeny.
\newblock {A Multi-Orientation Analysis Approach to Retinal Vessel Tracking}.
\newblock {\em J. Math. Imaging Vision}, 49(3):583--610, July 2014.

\bibitem{Bekkers2018Apr}
E.~J. Bekkers, M.~W. Lafarge, M.~Veta, K.~A.~J. Eppenhof, J.~P.~W. Pluim, and
  R.~Duits.
\newblock {Roto-Translation Covariant Convolutional Networks for Medical Image
  Analysis}.
\newblock {\em arXiv}, Apr. 2018.

\bibitem{Boscain2009}
U.~Boscain, G.~Charlot, and F.~Rossi.
\newblock Existence of planar curves minimizing length and curvature.
\newblock {\em Proceedings of the Steklov Institute of Mathematics}, 270, 06
  2009.

\bibitem{Boscain2010Sep}
U.~Boscain, G.~Charlot, and F.~Rossi.
\newblock {Existence of planar curves minimizing length and curvature}.
\newblock {\em Proc. Steklov Inst. Math.}, 270(1):43--56, Sept. 2010.

\bibitem{Boscain2018-et}
U.~Boscain, R.~Chertovskih, J.-P. Gauthier, D.~Prandi, and A.~Remizov.
\newblock Cortical-inspired image reconstruction via sub-riemannian geometry
  and hypoelliptic diffusion.
\newblock {\em ESAIM Proc. Surv.}, 64:37--53, 2018.

\bibitem{Boscain2014-nd}
U.~Boscain, R.~A. Chertovskih, J.~P. Gauthier, and A.~O. Remizov.
\newblock Hypoelliptic diffusion and human vision: A semidiscrete new twist.
\newblock {\em SIAM J. Imaging Sci.}, 7(2):669--695, jan 2014.

\bibitem{Boscain2012-se}
U.~Boscain, J.~Duplaix, J.-P. Gauthier, and F.~Rossi.
\newblock Anthropomorphic image reconstruction via hypoelliptic diffusion.
\newblock {\em SIAM j. control optim.}, 50(3):1309--1336, jan 2012.

\bibitem{boscain2019introduction}
U.~Boscain and M.~Sigalotti.
\newblock Introduction to controllability of nonlinear systems.
\newblock In {\em Contemporary Research in Elliptic PDEs and Related Topics},
  pages 203--219. Springer, 2019.

\bibitem{Boscain2018-qd}
U.~V. Boscain, R.~Chertovskih, J.-P. Gauthier, D.~Prandi, and A.~Remizov.
\newblock Highly corrupted image inpainting through hypoelliptic diffusion.
\newblock {\em J. Math. Imaging Vis.}, 60(8):1231--1245, oct 2018.

\bibitem{Cho39}
W.-L. Chow.
\newblock \"{U}ber {S}ysteme von linearen partiellen {D}ifferentialgleichungen
  erster {O}rdnung.
\newblock {\em Math. Ann.}, 117:98--105, 1939.

\bibitem{Citti2016-gz}
G.~Citti, B.~Franceschiello, G.~Sanguinetti, and A.~Sarti.
\newblock {Sub-Riemannian} mean curvature flow for image processing.
\newblock {\em SIAM J. Imaging Sci.}, 9(1):212--237, jan 2016.

\bibitem{Citti2006-xa}
G.~Citti and A.~Sarti.
\newblock A cortical based model of perceptual completion in the
  roto-translation space.
\newblock {\em J. Math. Imaging Vis.}, 24(3):307--326, may 2006.

\bibitem{Duits2010}
R.~DUITS and E.~FRANKEN.
\newblock Left-invariant parabolic evolutions on se(2) and contour enhancement
  via invertible orientation scores part i: Linear left-invariant diffusion
  equations on se(2).
\newblock {\em Quarterly of Applied Mathematics}, 68(2):255--292, 2010.

\bibitem{Evans2010}
L.~C. Evans.
\newblock {\em {Partial Differential Equations}}.
\newblock American Mathematical Society, 2010.

\bibitem{Field1994Jul}
D.~J. Field.
\newblock {What Is the Goal of Sensory Coding?}
\newblock {\em Neural Comput.}, 6(4):559--601, July 1994.

\bibitem{Franken2009}
E.~Franken and R.~Duits.
\newblock {Crossing-Preserving Coherence-Enhancing Diffusion on Invertible
  Orientation Scores}.
\newblock {\em Int. J. Comput. Vision}, 2009.

\bibitem{Gonzalez2008}
R.~C. Gonzalez and R.~E. Woods.
\newblock {\em {Digital Image Processing}}.
\newblock Prentice Hall, Upper Saddle River, NJ, USA, 2008.

\bibitem{Hggstrm2014}
M.~H\"{a}ggstr\"{o}m.
\newblock Medical gallery of mikael h\"{a}ggstr\"{o}m 2014.
\newblock {\em WikiJournal of Medicine}, 1(2), 2014.

\bibitem{Hannink2014}
J.~Hannink, R.~Duits, and E.~Bekkers.
\newblock {Crossing-Preserving Multi-scale Vesselness}.
\newblock In {\em {Medical Image Computing and Computer-Assisted Intervention
  {\textendash} MICCAI 2014}}, pages 603--610. Springer, Cham, Switzerland,
  2014.

\bibitem{Jones1987Dec}
J.~P. Jones and L.~A. Palmer.
\newblock {An evaluation of the two-dimensional Gabor filter model of simple
  receptive fields in cat striate cortex}.
\newblock {\em J. Neurophysiol.}, Dec. 1987.

\bibitem{kabanikhin2011inverse}
S.~I. Kabanikhin.
\newblock Inverse and ill-posed problems.
\newblock In {\em Inverse and Ill-posed Problems}. de Gruyter, 2011.

\bibitem{liu1996stability}
J.~Liu.
\newblock A stability analysis on beck's procedure for inverse heat conduction
  problems.
\newblock {\em Journal of Computational Physics}, 123(1):65--73, 1996.

\bibitem{Marcelja1980Nov}
S.~Mar{\ifmmode\hat{c}\else\^{c}\fi}elja.
\newblock {Mathematical description of the responses of simple cortical
  cells{$\ast$}}.
\newblock {\em JOSA}, 70(11):1297--1300, Nov. 1980.

\bibitem{Montgomery2006}
R.~Montgomery.
\newblock {\em {A Tour of Subriemannian Geometries, Their Geodesics and
  Applications}}.
\newblock American Mathematical Society, 2006.
\newblock [Online; accessed 14. Jul. 2023].

\bibitem{Olshausen1996Jun}
B.~A. Olshausen and D.~J. Field.
\newblock {Emergence of simple-cell receptive field properties by learning a
  sparse code for natural images}.
\newblock {\em Nature}, 381:607--609, June 1996.

\bibitem{Olshausen1997Dec}
B.~A. Olshausen and D.~J. Field.
\newblock {Sparse coding with an overcomplete basis set: A strategy employed by
  V1?}
\newblock {\em Vision Res.}, 37(23):3311--3325, Dec. 1997.

\bibitem{OrhanE1969}
A.~Orhan~E.
\newblock {\em {Neuroanatomical Basis of Clinical Neurology}}.
\newblock CRC Press, Boca Raton, FL, USA, Dec. 1969.

\bibitem{Petitot2003Mar}
J.~Petitot.
\newblock {The neurogeometry of pinwheels as a sub-Riemannian contact
  structure}.
\newblock {\em J. Physiol.-Paris}, 97(2):265--309, Mar. 2003.

\bibitem{Petitot2008}
J.~Petitot.
\newblock {\em
  {Neurog{\ifmmode\acute{e}\else\'{e}\fi}om{\ifmmode\acute{e}\else\'{e}\fi}trie
  de la vision: mod{\ifmmode\grave{e}\else\`{e}\fi}les
  math{\ifmmode\acute{e}\else\'{e}\fi}matiques et physiques des architectures
  fonctionnelles}}.
\newblock Les {\ifmmode\acute{E}\else\'{E}\fi}ditions de
  l'{\ifmmode\acute{E}\else\'{E}\fi}cole Polytechnique, 2008.

\bibitem{Petitot2017}
J.~Petitot.
\newblock {\em {Elements of Neurogeometry}}.
\newblock Springer International Publishing, Cham, Switzerland, 2017.

\bibitem{Petitot1999Mar}
J.~Petitot and Y.~Tondut.
\newblock {Vers une
  neurog{\ifmmode\acute{e}\else\'{e}\fi}om{\ifmmode\acute{e}\else\'{e}\fi}trie.
  Fibrations corticales, structures de contact et contours subjectifs modaux}.
\newblock {\em Math{\ifmmode\acute{e}\else\'{e}\fi}matiques et sciences
  humaines. Mathematics and social sciences}, 145, Mar. 1999.

\bibitem{Prandi2017Apr}
D.~Prandi and J.-P. Gauthier.
\newblock {\em A semidiscrete version of the Citti-Petitot-Sarti model as a
  plausible model for anthropomorphic image reconstruction and pattern
  recognition}.
\newblock Springer, 2018.

\bibitem{Ras38}
P.~K. Rashevski\u\i.
\newblock On the connectability of two arbitrary points of a totally
  nonholonomic space by an admissible curve.
\newblock {\em Uchen. Zap. Mosk. Ped. Inst. Ser. Fiz.-Mat. Nauk}, 3(2):83--94,
  1938.

\bibitem{Strauss2007}
W.~A. Strauss.
\newblock {\em {Partial Differential Equations: An Introduction}}.
\newblock Wiley, Chichester, England, UK, 2007.

\bibitem{vanderWalt2014}
S.~van~der Walt, J.~L. Sch\"{o}nberger, J.~Nunez-Iglesias, F.~Boulogne, J.~D.
  Warner, N.~Yager, E.~Gouillart, and T.~Yu.
\newblock scikit-image: image processing in python.
\newblock {\em PeerJ}, 2:e453, June 2014.

\bibitem{wang2010numerical}
Y.~Wang, J.~Cheng, J.~Nakagawa, and M.~Yamamoto.
\newblock A numerical method for solving the inverse heat conduction problem
  without initial value.
\newblock {\em Inverse Problems in Science and Engineering}, 18(5):655--671,
  2010.

\bibitem{weber1981analysis}
C.~F. Weber.
\newblock Analysis and solution of the ill-posed inverse heat conduction
  problem.
\newblock {\em International Journal of Heat and Mass Transfer},
  24(11):1783--1792, 1981.

\bibitem{Zhang2016Aug}
J.~Zhang, B.~Dashtbozorg, E.~Bekkers, J.~P.~W. Pluim, R.~Duits, and B.~M. ter
  Haar~Romeny.
\newblock {Robust Retinal Vessel Segmentation via Locally Adaptive Derivative
  Frames in Orientation Scores}.
\newblock {\em IEEE Trans. Med. Imaging}, 35(12):2631--2644, Aug. 2016.

\end{thebibliography}

\end{document}